\newcolumntype{?}{!{\vrule width 1.5pt}}
\newtheorem{theorem}{Theorem}
\newtheorem{lemma}[theorem]{Lemma}
\newtheorem{corollary}[theorem]{Corollary}
\newcommand{\oplea}{\mbox{${(1+\lambda)}$~EA}\xspace}
\newcommand{\NSGA}{NSGA\nobreakdash-II\xspace}
\newcommand{\jump}{\textsc{Jump}\xspace}
\newcommand{\oneminmax}{\textsc{OneMinMax}\xspace}
\newcommand{\lotz}{\textsc{LOTZ}\xspace}
\newcommand{\ojzj}{\textsc{OneJumpZeroJump}\xspace}
\newcommand{\onejumpzerojump}{\textsc{OneJumpZeroJump$_{n,k}$}\xspace}
\newcommand{\jumpnk}{\textsc{Jump$_{n,k}$}\xspace}
\newcommand{\omm}{\oneminmax}
\DeclareMathOperator{\cDis}{cDis}
\let\originalleft\left
\let\originalright\right
\renewcommand{\left}{\mathopen{}\mathclose\bgroup\originalleft}
\renewcommand{\right}{\aftergroup\egroup\originalright}
\begin{document}
\title{From Understanding the Population Dynamics of the NSGA-II \\ to the First Proven Lower Bounds}

\author{Benjamin Doerr \and Zhongdi Qu}
\date{}

\maketitle

\begin{abstract}
  Due to the more complicated population dynamics of the NSGA-II, none of the existing runtime guarantees for this algorithm is accompanied by a non-trivial lower bound. Via a first mathematical understanding of the population dynamics of the NSGA-II, that is, by estimating the expected number of individuals having a certain objective value, we prove that the NSGA-II with suitable population size needs $\Omega(Nn\log n)$ function evaluations to find the Pareto front of the \oneminmax problem and $\Omega(Nn^k)$  evaluations on the \ojzj problem with jump size~$k$. These bounds are asymptotically tight (that is, they match previously shown upper bounds) and show that the NSGA-II here does not even in terms of the parallel runtime (number of iterations) profit from larger population sizes. For the \ojzj problem and when the same sorting is used for the computation of the crowding distance contributions of the two objectives, we even obtain a runtime estimate that is tight including the leading constant. %This is the first proven runtime result for a multi-objective evolutionary algorithm that determines the leading constant of the runtime.
\end{abstract}

{\sloppy 
\section{Introduction}

Many real-world problems have several, often conflicting objectives. For such \emph{multi-objective} optimization problems, it is hard to compute a single solution. Instead, one usually computes a set of incomparable, interesting solutions from which a decision maker can select the most preferable one. Due to their population-based nature, evolutionary algorithms (EAs) are well suited for such problems, and in fact, are intensively used in multi-objective optimization. 

The most accepted multi-objective evolutionary algorithm (MOEA) in practice~\cite{ZhouQLZSZ11} is the \emph{non-dominated sorting genetic algorithm~II} (\NSGA) proposed in~\cite{DebPAM02}. It uses a fixed population size~$N$, generates $N$ new solutions per iteration, and selects the next population according to the non-dominated sorting of the combined parent and offspring population and the crowding distance. Due to this complex structure, for a long time no mathematical runtime analyses existed for this algorithm. However, in 2022 four such works appeared, all greatly enhancing our understanding of how this algorithm works, but also detecting weaknesses and proposing promising remedies~\cite{ZhengLD22, ZhengD22gecco, BianQ22ppsn, DoerrQ22ppsn}. 

Interestingly, and different from the previous runtime analyses of other MOEAs, none of these works proved a non-trivial lower bound on the runtime. Such bounds are important since only by comparing upper and lower bounds for different algorithms can one declare one algorithm superior to another. Such bounds are also necessary to have tight runtime estimates, from which information about optimal parameter values can be obtained.  

The lack of lower bounds for the \NSGA, naturally, is caused by the more complicated population dynamics of this complex algorithm. While for algorithms like the SEMO or global SEMO predominantly analyzed in MOEA theory, it follows right from the definition of the algorithm that there can be at most one individual per objective value, such a structural information does not exist for the \NSGA.

In this work, we gain a first deeper understanding of the population dynamics of the \NSGA (more precisely, the mutation-based version regarded in almost all previous runtime results for the \NSGA). For the optimization of the \oneminmax and the \ojzj benchmark, we prove that also for relatively large population sizes, only a constant number of individuals exists on the outer positions of the Pareto front. This information allows us to prove upper bounds on the speed with which the Pareto front is explored, and finally yields lower bounds on the runtime of the \NSGA on these two benchmarks. 

More specifically, we prove the following lower bound for the \ojzj benchmark (we do not discuss here in detail the result for \omm and refer instead to Theorem~\ref{thm:omm}). Let~$N$ denote the population size of the \NSGA, $n$ denote the problem size (length of the bit-string encoding), and $k$ the gap parameter of the \ojzj problem. If $N$ is at least $4$ times the size $n-2k+3$ of the Pareto front and $N = o(n^2 / k^2)$, then the time to compute the Pareto front of the \ojzj problem is at least $(\frac{3(e-1)}{8} - o(1)) N n^k$ fitness evaluations or, equivalently, $(\frac{3(e-1)}{8} - o(1)) n^k$ iterations. This result shows that the upper bound of $O(Nn^k)$ fitness evaluations or $O(n^k)$ iterations proven in~\cite{DoerrQ22ppsn} is asymptotically tight for broad ranges of the parameters. In particular, this shows that there is no advantage in using a population size larger than the smallest admissible one, not even when taking the number of iterations as the performance measure. This is very different from the single-objective world, where, for example, the runtime of the \oplea on the single-objective \jump problem with jump size~$k$ is easily seen to be $\Omega(n^k / \lambda)$ and $O(n^k / \lambda + n \log n)$ iterations, hence for $\lambda = o(n^{k-1} / \log(n))$ the number of iterations reduces with growing value of $\lambda$ and the number of fitness evaluations does not change (when ignoring lower-order terms). This comparison suggests that the choice of the population size might be more critical for the \NSGA than for single-objective EAs. 

For the variant of the \NSGA which uses the same sorting to compute the crowding distance contribution of both objectives (which is a natural choice for two objectives), we can even determine the runtime precise apart from lower order terms. To this aim, we also exploit our new understanding of the population dynamics to prove a tighter upper bound on the runtime. We shall not exploit this further in this work, but we note that such tight analyses are the prerequisite for optimizing parameters, here for example the mutation rate. To the best of our knowledge, this is only the second runtime analysis of a MOEA that determines the leading constant of the runtime (the other one being the analysis of the synthetic GSEMO algorithm on the \ojzj benchmark).

Overall, this work constitutes a first step towards understanding the population dynamics of the \NSGA. We exploit this to prove the first asymptotically tight lower bounds. In a non-trivial special case, we even determine a runtime precise apart from lower-order terms. These results already give some information on the optimal parameter values, and we are optimistic that our methods can lead to more insights about the right parameter choices of the \NSGA.

\section{Previous Works}\label{sec:previous}

For a general introduction to multi-objective optimization via evolutionary algorithms, including the most prominent algorithm \NSGA (47000 citations on Google scholar), we refer to~\cite{ZhouQLZSZ11}. This work is concerned with the mathematical runtime analysis of a MOEA, which is a subarea of the broader research area of runtime analyses for randomized search heuristics~\cite{AugerD11}. 

The first runtime analyses of MOEAs date back to the early 2000s~\cite{LaumannsTZWD02,Giel03,Thierens03} and regarded the artificial SEMO and GSEMO algorithms, which are still the most regarded algorithms in MOEA theory (see, e.g.,~\cite{BianQT18ijcaigeneral, QianYTYZ19, QianLZ22} for some recent works). Some time later, the first analyses of the more realistic MOEAs SIBEA~\cite{BrockhoffFN08,NguyenSN15,DoerrGN16} and the MOEA/D~\cite{LiZZZ16,HuangZCH19,HuangZ20,HuangZLL21} followed. Very recently, the first runtime analysis of the \NSGA appeared~\cite{ZhengLD22}, which was quickly followed up by further runtime analyses of this algorithm.

In~\cite{ZhengLD22}, it was proven that the \NSGA with population size~$N$ can efficiently optimize the classic \omm and \lotz benchmarks if $N$ is at least four times the size of the Pareto front. A population size equal to the size of the Pareto front does not suffice. In this case, the \NSGA loses desired solutions often enough so that the \NSGA covers only a constant fraction of the Pareto front for at least an exponential time. However, with smaller population sizes, the \NSGA can still compute good approximations to the Pareto front, as proven in~\cite{ZhengD22gecco} again for the \omm benchmark. The first runtime analysis on a benchmark with multimodal objectives~\cite{DoerrQ22ppsn} showed that the \NSGA, again with population size $N$ at least four times the size of the Pareto front, computes the Pareto front of the \ojzj benchmark with jump parameter $k \in [2..n/4]$ in expected time at most $O(N n^k)$. These three works regard a version of the \NSGA without crossover. In~\cite{BianQ22ppsn}, besides other results, the original \NSGA with crossover is regarded. However, no better runtime guarantees are proven for this algorithm.

For none of the runtime guarantees proven in these works, a matching (or at least non-trivial) lower bound was shown. The apparent reason, spelled out explicitly in the conclusion of~\cite{DoerrQ22ppsn}, is the lack of understanding of the population dynamics for this algorithm. We note that this problem is very present even for the simpler SEMO/GSEMO algorithm despite the fact that here the population is much more restricted. In particular, the strict selection mechanism of these algorithms ensures that for each objective value there is at most one individual in the population. Nevertheless, also a decent number of runtime results for these algorithms do not have a matching lower bound. For the SEMO algorithm, which uses one-bit mutation, a lower bound matching the $O(n^2 \log n)$ upper bound on \omm of~\cite{GielL10} was shown ten years later in~\cite{OsunaGNS20}. For the GSEMO, using bit-wise mutation instead, no lower bound is known for the \omm benchmark. Similarly, for the \lotz benchmark a tight $\Theta(n^3)$ runtime of the SEMO was proven in~\cite{LaumannsTZWD02} already. The same upper bound was proven in that work for the GSEMO, but the only lower bound~\cite{DoerrKV13} for this problem is valid only for an unrealistically small mutation rate. Only for the \ojzj benchmark, a tight bound of $\Theta(n^k)$ was proven also for the GSEMO~\cite{DoerrZ21aaai}, clearly profiting from the fact that the population can contain at most one individual on the local optimum of the objectives.

\section{Preliminaries}\label{sec:prelim}
%\subsection{Bi-objective maximization}
%A multi-objective optimization problem consists of maximizing or minimizing several often conflicting objectives simultaneously. Here the problem we consider requires the maximization of two objectives, and the inputs are bit strings of dimension $n$, i.e., $f = (f_1, f_2):\{0, 1\}^n\rightarrow \mathbb{R}^2$. For any $x, y \in \{0, 1\}^n$, $x \succ y$ (reads $x$ dominates $y$), if $f_1(x)\geq f_1(y)$, $f_2(x) \geq f_2(y)$, and at least one of the inequalities is strict. An individual $x\in \{0, 1\}^n$ is called Pareto-optimal if there is no $y \in \{0, 1\}^n$ such that $y \succ x$. All Pareto optimal individuals compose the Pareto set. The set of the function values of the Pareto set is called the Pareto front.
%\subsubsection{Notation}
%For the kind of optimization problems that we consider, we use $n$ to denote the dimension of the problem, i.e., the length of the input bit strings. For an individual $x\in \{0,1\}^n$, $|x|_1$ denotes the number of bits of $x$ that are 1, and $|x|_0$ denotes the number of bits of $x$ that are 0. The Pareto set is denoted by $S^*$ and the Pareto front is denoted by $F^*$.
\subsection{The NSGA-II Algorithm}\label{sec:prelim-alg}

In the interest of brevity, we only give a brief overview of the algorithm here and refer to \cite{DebPAM02} for a more detailed description of the general algorithm and to \cite{ZhengLD22} for more details on the particular version of the NSGA\nobreakdash-II we regard.

The NSGA-II uses two metrics, rank and crowding distance, to completely order any population. The ranks are defined recursively based on the dominance relation. All non-dominated individuals have rank~1. Then, given that the individuals of ranks $1, \dots, k$ are defined, the individuals of rank $k+1$ are those not dominated except by individuals of rank $k$ or smaller. This defines a partition of the population into sets $F_1$, $F_2$,\dots such that $F_i$ contains all individuals with rank $i$. Clearly, individuals with lower ranks are preferred. The crowding distance, denoted by $\cDis(x)$ for an individual~$x$, is used to compare individuals of the same rank. To compute the crowding distances of individuals of rank $i$ with respect to a given objective function $f_j$, we first sort the individuals in ascending order according to their $f_j$ objective values. The first and last individuals in the sorted list have infinite crowding distance. For the other individuals, their crowding distance is the difference between the objective values of its left and right neighbors in the sorted list, normalized by the difference of the minimum and maximum values. The final crowding distance of an individual is the sum of its crowding distances with respect to each objective function. Among individuals of the same rank, the ones with higher crowding distances are preferred.

The algorithm starts with a random initialization of a parent population of size~$N$. In each iteration, $N$ children are generated from the parent population via a variation operator, and $N$ best individuals among the combined parent and children population survive to the next generation based on their ranks and, as a tie-breaker, the crowding distance. At each iteration, the critical rank $i^*$ is the rank such that if we take all individuals of ranks smaller than $i^*$, the total number of individuals will be less than or equal to $N$, but if we also take all individuals of rank $i^*$, the total number of individuals will be more than~$N$. Thus, all individuals of rank smaller than $i^*$ survive to the next generation, and for individuals of rank $i^*$, we take the individuals with the highest crowding distance, breaking ties randomly, so that in total exactly $N$ individuals are kept. In practice, the algorithm is run until some stopping criterion is met. In our mathematical analysis, we are interested in how long it takes until the full Pareto front is covered by the population if the algorithm is not stopped earlier. For that reason, we do not specify a termination criterion. 

In our analysis, for simplicity we assume that in each iteration every parent produces one child through bit-wise mutation, i.e., mutating each bit independently with probability~$\frac{1}{n}$. Our analysis also holds for uniform selection, where $N$ times a parent is selected independently at random, since also here each individual is selected as parent once in expectation, and our proofs only rely on the expected number of times a parent is selected. When selecting parents via binary tournaments the expected number of times a parent is selected is at most two (which is the expected number of times it participates in a tournament). This estimate would change the population dynamics by constant factors. For that reason, we are optimistic that our methods apply also to this type of selection, but we do not discuss this question in more detail.

For any generation $t$ of a run of the algorithm, we use $P_t$ to denote the parent population and $R_t$ to denote the combined parent and offspring population. 

%\subsubsection{Notation}
%For a run of the NSGA-II algorithm, we use $N$ to denote the size of the population maintained throughout the generations. For any generation $t$, $P_t$ is used to denote the parent population, $Q_t$ the offspring population generated from $P_t$, and $R_t$ the combined population $P_t \cup Q_t$.

\subsection{The \ojzj Benchmark}
Let $n\in\mathbb{N}$ and $k=[2..n/4]$. The function $\onejumpzerojump = (f_1, f_2):\{0,1\}^n\rightarrow\mathbb{R}^2$, proposed by \cite{DoerrZ21aaai}, is defined by
\[f_1(x) = \begin{cases}
    k+|x|_1, & \text{if }|x|_1 \leq n-k\text{ or } x=1^n,\\
    n-|x|_1, & \text{else};
    \end{cases}\]
\[f_2(x) = \begin{cases}
    k+|x|_0, & \text{if }|x|_0 \leq n-k\text{ or } x=0^n,\\
    n-|x|_0, & \text{else},
    \end{cases}\]
where $|x|_1$ denotes the number of bits of $x$ that are 1 and $|x|_0$ denotes the number of bits of $x$ that are 0. The aim is to maximize both $f_1$ and $f_2$, two multimodal objectives. The first objective is the classical $\jumpnk$ function. It has a valley of low fitness around its optimum, which can be crossed only by flipping the $k$ correct bits, if no solutions of lower fitness are accepted. The second objective is isomorphic to the first, with the roles of zeroes and ones exchanged.

According to Theorem~$2$ of \cite{DoerrZ21aaai}, the Pareto set of this benchmark is $S^*=\{x \in \{0,1\}^n \mid|x|_1 = [k..n-k]\cup\{0, n\}\}$, and the Pareto front $F^*=f(S^*)$ is $\{(a, 2k+n-a)\mid a\in[2k..n]\cup\{k, n+k\}\}$, making the size of the front $n-2k+3$. We define the inner part of the Pareto set by $S_{I}^*=\{x\mid|x|_{1}\in [k..n-k]\}$, and the inner part of the Pareto front by $F_{I}^*=f(S_{I}^*)=\{(a, 2k+n-a)\mid a\in[2k..n]\}$. \cite{DoerrQ22ppsn} showed that when using a population of size $N\geq 4(n-2k+3)$ to optimize this benchmark, the NSGA-II algorithm never loses a Pareto-optimal solution once found. Moreover, $O(n^k)$ iterations are needed in expectation. %Our analyses build on these results.

\subsection{The \oneminmax Benchmark}
Let $n\in \mathbb{N}$. The function $\oneminmax=(f_1, f_2):\{0, 1\}^n\rightarrow \mathbb{R}$, proposed by \cite{GielL10}, is defined by
\[f(x)=(f_1(x), f_2(x))=\left(n-\sum_i^n x_i, \sum_i^n x_i\right).\]
The aim is to maximize both objectives.

For this benchmark, any solution is Pareto-optimal and the Pareto front $F^*=\{(0, n), (1, n-1), \dots ,(n,0)\}$. Hence $|F^*|=n+1$. \cite{ZhengLD22} showed that when using a population of size $N\geq4(n+1)$ to optimize the benchmark, the NSGA-II algorithm never loses a Pareto-optimal solution once found. Moreover, in expectation $O(n\log n)$ iterations are needed. %In our analyses, we will also use these results as given.

\section{Lower Bound on the Runtime of the NSGA-II on \ojzj}\label{sec:low_bound_ojzj}
In this section, we give a lower bound on the runtime of the NSGA-II algorithm on the \ojzj benchmark. We use $X_{P_t}^i$ to denote the number of individuals with $n-k-i$ 1-bits in $P_t$ and $X_{R_t}^i$ to denote that in $R_t$.

We first show that with probability arbitrarily close to $1$, we have that $P_t\subseteq S^*$ for any $t$ so that  the analyses that follow do not need to consider gap individuals (those with between $1$ and $k-1$ zeroes or ones) as parents.

\begin{lemma}\label{lem:all_in} Consider the NSGA-II algorithm optimizing the \onejumpzerojump benchmark for $2\leq k \leq \frac{n}{4}$ with population size $N=c(n-2k+3)$ for $c=o(n)$. The probability that $P_0 \subseteq S_{I}^*$ is $1-o(1)$. Moreover, if $P_0 \subseteq S_{I}^*$, then for any generation $t$, we have $P_t \subseteq S^*$.
\end{lemma}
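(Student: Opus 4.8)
The plan is to prove the two assertions separately: the initialization claim by concentration, and the invariance claim by induction on the generation supported by a domination analysis of the gap individuals. For the first claim, I would use that each of the $N$ individuals of $P_0$ is sampled uniformly at random, so $|x|_1 \sim \Bin(n,1/2)$ for each such $x$. Since $k \le n/4$, the event $x \notin S_{I}^*$ is contained in $\{|x|_1 \le n/4\}\cup\{|x|_1 \ge 3n/4\}$, which has probability $e^{-\Omega(n)}$ by a Chernoff bound. A union bound over the $N = c(n-2k+3) = o(n^2)$ individuals then yields $\Pr[P_0 \not\subseteq S_{I}^*] \le N\cdot e^{-\Omega(n)} = o(1)$, as claimed.

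The core of the lemma is the second claim, which I would prove by induction on $t$, maintaining the stronger invariant that $P_t \subseteq S^*$ \emph{and} that $P_t$ contains at least one inner Pareto-optimal individual. The base case is immediate, since $P_0 \subseteq S_{I}^* \subseteq S^*$ consists entirely of inner individuals. The key deterministic ingredient I would establish first is a domination statement: any inner individual $z$ with $|z|_1 = m \in [k..n-k]$, which has objective value $(k+m,\,k+n-m)$, strictly dominates \emph{every} gap individual. Indeed, a ``few-ones'' gap individual with $|x|_1 = j \in [1..k-1]$ has value $(k+j,\,j)$ and a ``few-zeros'' gap individual with $|x|_0 = i \in [1..k-1]$ has value $(i,\,k+i)$; in both cases a short computation using $k \le m \le n-k$ shows that both coordinates of $f(z)$ are at least as large, with at least one strictly larger. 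Hence a single inner individual dominates all gap individuals at once.

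Given the invariant at generation $t$, every individual of $P_t$ is Pareto-optimal and therefore non-dominated in $R_t$, so $R_t$ contains at least $N$ individuals of rank~$1$. Moreover, the inner individual guaranteed by the invariant lies in $P_t \subseteq R_t$ and, by the domination statement, dominates every gap individual of $R_t$, so each gap individual has rank at least~$2$. Since the selection fills $P_{t+1}$ from rank~$1$ upward and there are already at least $N$ rank-$1$ individuals, the critical rank is~$1$ and $P_{t+1}$ consists exclusively of rank-$1$ individuals, none of which can be a gap individual; thus $P_{t+1} \subseteq S^*$.

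The one remaining step, which I expect to be the delicate point, is re-establishing the second half of the invariant, namely that $P_{t+1}$ still contains an inner individual. This is exactly where the argument would fail if the population were allowed to drift to copies of the extreme points $0^n$ and $1^n$ only: a child of $0^n$ with $k-1$ ones has value $(2k-1,\,k-1)$, which is incomparable to $0^n=(k,\,k+n)$ and could then survive as a rank-$1$ gap individual. To rule this out I would invoke the property, established in \cite{DoerrQ22ppsn} for $N \ge 4(n-2k+3)$, that the \NSGA never loses a covered Pareto-optimal objective value. Since $P_0$ already covers at least one inner front value, that value stays covered in every generation, guaranteeing a persistent inner individual and closing the induction.
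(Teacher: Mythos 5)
Your proof of the initialization claim is the same as the paper's (an additive Chernoff bound per individual followed by a union bound over the $N=o(n^2)$ individuals) and is fine. For the invariance claim you take a genuinely different and in one respect more scrupulous route. The paper disposes of the induction step in one line by asserting that \emph{any} individual in $S^*$ dominates \emph{any} individual outside $S^*$, so that all gap individuals have rank at least $2$. You correctly observe that this blanket claim fails for the extreme points: $0^n$ has value $(k,n+k)$ and does not dominate a few-ones gap individual with value $(k+j,j)$, $j\in[1..k-1]$, and symmetrically for $1^n$; only the \emph{inner} Pareto optima dominate every gap individual. Your domination computations are correct, and with your strengthened invariant (``$P_t\subseteq S^*$ and $P_t$ contains an inner individual'') the selection argument --- at least $N$ rank-one individuals in $R_t$, every gap individual of rank at least $2$ --- goes through exactly as in the paper. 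So you have put your finger on a real imprecision in the published proof and proposed a natural repair.

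The gap is in how you close the induction. To re-establish that $P_{t+1}$ still contains an inner individual you invoke the no-loss property of \cite{DoerrQ22ppsn}, which is proved only for $N\geq 4(n-2k+3)$. The lemma, however, assumes only $N=c(n-2k+3)$ with $c=o(n)$, with no lower bound on $c$ (and it is later applied in the fixed-sorting section with $c\geq 2$). As written, your argument therefore proves the statement only under the additional hypothesis $c\geq 4$, which is not part of the lemma; you would need either to justify persistence of an inner value for small $c$ by a different argument or to accept the restricted range. (The paper's own one-line proof silently needs the same kind of persistence --- a dominator of each gap individual must actually be present in $R_t$ --- so this is a weakness you inherited from the problem rather than introduced.) A further small imprecision: in your failure scenario you compare the child of $0^n$ with value $(2k-1,k-1)$ only against $0^n$; if $1^n$ were also present it would dominate that child, so the genuinely problematic configuration is a population consisting of copies of $0^n$ only (or of $1^n$ only), not of both extremes together.
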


\begin{proof}
For a randomly initialized individual in $P_0$, we have $\mathbb{E}[|x|_1] = \frac{n}{2}$. Since $k \leq \frac{n}{4}$, by the additive Chernoff bound, with probability at most $2e^{-\frac{2}{n}(\frac{n}{4})^2}=2e^{-\frac{n}{8}}$, the initial individual $x$ has $|x|_1 < k$ or $|x|_1 > n-k$. Then by the union bound, $P_0 \subseteq S_{I}^*$ with probability at least $1-2Ne^{-\frac{n}{8}}$, which is $1-o(1)$ since $c=o(n)$.

Consider a run of the algorithm where $P_0 \subseteq S_{I}^*$. We show by induction that $P_t \subseteq S^*$. The base case is obviously $P_0 \subseteq S_{I}^*\subseteq S^*$. Suppose $P_{t}\subseteq S^*$ by the induction hypothesis. Since any individual in $S^*$ dominates any individual not in $S^*$, the rank-1 individuals in $R_t$ are exactly those in $S^*$. Since there are at least $|P_t|=N$ individuals in $S^*$ in $R_t$, no individuals not in $S^*$ can survive. Therefore $P_{t+1}\subseteq S^*$.  
\end{proof}

Now we give an upper bound on the probability for any individual to obtain more $1$-bits through bit-wise mutation.

\begin{lemma}\label{lem:going_up} 
  Let $n, u, v\in \mathbb{N}$ with $n\geq 2$, $u, v\geq 1$, and $u+v \leq n$. Suppose $x \in \{0, 1\}^n$ and $|x|_1 \leq v$. Denote the result of applying bit-wise mutation to $x$ by $x'$. Then \[\Pr[|x'|_1 = u+v] \leq \left(\frac{n-v}{n}\right)^u.\]
\end{lemma}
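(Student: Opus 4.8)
The plan is to reduce the event $\{|x'|_1 = u+v\}$ to a purely combinatorial condition on how many of the $0$-bits of $x$ must flip, bound that condition by a union bound, and then verify an elementary inequality by induction on the ``deficit'' $v - |x|_1$.

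First I would fix $w := |x|_1 \le v$ and record a necessary condition. If $|x'|_1 = u+v$, write $C$ for the number of positions carrying a $1$ in both $x$ and $x'$ (that is, $1$-bits that are not flipped). Then $C \le |x|_1 = w$, so the number of $1$-bits of $x'$ sitting on positions that were $0$ in $x$ equals $(u+v) - C \ge (u+v) - w =: m$. Each such position is a $0$-bit of $x$ that flipped to $1$. Hence $\{|x'|_1 = u+v\}$ is contained in the event that at least $m$ of the $n-w$ zero-bits of $x$ flip. Since $w \le v$ we have $m \ge u \ge 1$, and since $u+v\le n$ we have $m \le n-w$, so this is a genuine constraint. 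Writing $Z$ for the set of $0$-bits of $x$ that flip, the event $\{|Z| \ge m\}$ is the union, over all $m$-element subsets $A$ of the zero-positions, of the events $\{A \subseteq Z\}$, each of probability $(1/n)^m$ by independence of the bit flips. A union bound then gives
\[
\Pr[|x'|_1 = u+v] \le \Pr[|Z| \ge m] \le \binom{n-w}{m}\left(\frac1n\right)^{m}.
\]

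It then remains to show $\binom{n-w}{m}(1/n)^m \le \left(\frac{n-v}{n}\right)^u$. Substituting $N := n-v$ and $s := v-w = m-u$ (so $n-w = N+s$ and $m = u+s$, with $0 \le s \le v$ and $N \ge u$ because $u+v\le n$), this is equivalent, after multiplying by $n^{u+s}$, to the claim $\binom{N+s}{u+s} \le N^u\, n^{s}$. I would prove this by induction on $s$. The base case $s=0$ is $\binom{N}{u}\le N^u$, which is immediate. For the step I would use the identity $\binom{N+s+1}{u+s+1} = \binom{N+s}{u+s}\cdot \frac{N+s+1}{u+s+1}$ together with the induction hypothesis; since $u+s+1 \ge 1$ and $N+s+1 \le n$ (valid as long as $s+1 \le v$, which is exactly the range we need), the extra factor is at most $N+s+1 \le n$, advancing the exponent of $n$ by one and closing the induction.

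I expect this inductive inequality to be the only real work; the remaining steps are a counting argument and a union bound. The main obstacle is resisting the temptation to bound $\binom{n-w}{m}(1/n)^m$ by the cruder $\frac{1}{m!}\big(\frac{n-w}{n}\big)^{m}$, which discards too much when $w$ is much smaller than $v$ (equivalently, when $m$ is large). Keeping the binomial coefficient intact and running the induction on $s$ is what makes the constant-free bound $\left(\frac{n-v}{n}\right)^u$ come out, and it also explains why the hypothesis $|x|_1 \le v$ rather than merely $\le n$ is essential.
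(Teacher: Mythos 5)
Your proof is correct, but it takes a genuinely different route from the paper's. The paper first handles the extremal case $|x|_1=v$ directly (there, $u$ of the $n-v$ zero-bits must flip, giving $\binom{n-v}{u}(1/n)^u\le\left(\frac{n-v}{n}\right)^u$), and then reduces the general case $|x|_1<v$ to it by a monotonicity argument: it writes out the exact probabilities as sums over the number of flipped zero-bits, divides corresponding summands, and shows each ratio is at least $1$, concluding by induction on the deficit that the probability of reaching $u+v$ ones is maximized at $|x|_1=v$. You instead bound every case $|x|_1=w\le v$ uniformly: the event forces at least $m=u+v-w$ zero-bits to flip, a union bound gives $\binom{n-w}{m}(1/n)^m$, and the combinatorial inequality $\binom{N+s}{u+s}\le N^u n^s$ (with $N=n-v$, $s=v-w$), proved by induction on $s$ via $\binom{N+s+1}{u+s+1}=\binom{N+s}{u+s}\cdot\frac{N+s+1}{u+s+1}\le\binom{N+s}{u+s}\cdot n$, closes the argument. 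Your version is more elementary and arguably more robust: it avoids the delicate term-by-term ratio computation (which in the paper requires checking where the ratio is minimized over $i$, $u$, $v$, and contains a couple of typographical slips such as ``$|x|_1<u-1$'' for ``$|x|_1<v-1$''), at the cost of not establishing the stronger and independently useful monotonicity statement that the paper's coupling-style comparison yields. Your closing remark is also apt: replacing $\binom{n-w}{m}(1/n)^m$ by $\frac{1}{m!}\left(\frac{n-w}{n}\right)^m$ would indeed not suffice when $w$ is much smaller than $v$, so keeping the binomial coefficient and trading each extra factor against one power of $n$ is the essential point.
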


\begin{proof}
Suppose an individual $y$ has $v$ bits of $1$ and the result of applying bit-wise mutation to $y$ is $y'$. Then $\Pr[|y'|_1 = u+v] \leq \binom{n-v}{u}(\frac{1}{n})^u\leq (\frac{n-v}{n})^u$ since $u$ of the $n-v$ 0-bits of $y$ have to be flipped. Therefore, to prove the claim, we show that $\Pr[|x'|_1 = u+v] \leq \Pr[|y'|_1 = u+v]$.

The case for $|x|_1 = |y|_1$ is obvious. Then suppose $|x|_1 = v-1$. We have $\Pr[|y'|_1 = u+v] = $
\begin{equation}\label{eq:1}\sum_{i=u}^{\min\{u+v, n-v\}}\binom{n-v}{i}\binom{v}{i-u}\left(\frac{1}{n}\right)^{2i-u}\left(1-\frac{1}{n}\right)^{n-2i+u}\end{equation}
and $\Pr[|x'|_1 = u+v] =$
\begin{equation}\label{eq:2}\begin{aligned}
\sum_{i=u}^{\min\{u+v-1, n-v\}}(&\binom{n-v+1}{i+1}\binom{v-1}{i-u}\\&\left(\frac{1}{n}\right)^{2i-u+1}\left(1-\frac{1}{n}\right)^{n-2i+u-1})\end{aligned}
\end{equation} 
Dividing the summands of equation \eqref{eq:1} by those of \eqref{eq:2} one by one, we have the quotient $\frac{{n-v\choose i}{v\choose {i-u}}(\frac{1}{n})^{2i-u}(1-\frac{1}{n})^{n-2i+u}}{{n-v+1\choose i+1}{v-1\choose {i-u}}(\frac{1}{n})^{2i-u+1}(1-\frac{1}{n})^{n-2i+u-1}} = \frac{v(i+1)(n-1)}{(v-i+u)(n-v+1)}$, which increases in $i$. So the quotient is minimized when $i=u$, making it $\frac{(u+1)(n-1)}{n-v+1}$, which in turn is minimized when $v=u=1$, making it $\frac{2(n-1)}{n} \geq 1$ for $n\geq 2$. Since the summands in equations \eqref{eq:1} and \eqref{eq:2} both start at $i=u$ and there are at least as many summands in \eqref{eq:1} as in \eqref{eq:2}, we have $\Pr[|x'|_1 = u+v] \leq \Pr[|y'|_1 = u+v]$. The cases for $|x|_1 < u-1$ follow by induction based on the case for $|x|_1 = u-1$.
\end{proof}

\begin{corollary}\label{cor:flip_positive}
Let $n\in \mathbb{N}$ with $n\geq 2$ and let $v\in [1..n]$. Suppose $x \in \{0, 1\}^n$ and $|x|_1 \leq v$. Denote the result of applying bit-wise mutation to $x$ by $x'$. Then $\Pr[|x'|_1=v \land x' \neq x] \leq \frac{n-v+1}{n}$.
\end{corollary}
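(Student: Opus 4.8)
The plan is to split the analysis according to whether $x$ already carries the target number $v$ of one-bits or strictly fewer, and to reduce the first situation to Lemma~\ref{lem:going_up} while handling the second by a direct union bound. The key observation is that the two cases yield slightly different estimates, with the case $|x|_1 \le v-1$ being the one that attains the claimed value $\frac{n-v+1}{n}$.

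First I would treat the case $|x|_1 \le v-1$. Here the event $|x'|_1 = v$ already forces $x' \neq x$, since the number of one-bits strictly increases; hence the event in the statement coincides with $\{|x'|_1 = v\}$ and the side condition $x'\neq x$ can be dropped. For $v \ge 2$ I would apply Lemma~\ref{lem:going_up} with its internal parameter $v$ replaced by $v-1$ and with $u = 1$: the hypotheses $u \ge 1$, $v-1 \ge 1$, and $u + (v-1) = v \le n$ all hold, and the lemma then gives $\Pr[|x'|_1 = v] \le \frac{n-(v-1)}{n} = \frac{n-v+1}{n}$. The only boundary value to check separately is $v = 1$, which forces $|x|_1 = 0$, i.e.\ $x = 0^n$; there $\Pr[|x'|_1 = 1] = (1-\tfrac1n)^{n-1} \le 1 = \frac{n-v+1}{n}$, so the bound holds trivially.

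Next I would handle the case $|x|_1 = v$. To obtain $|x'|_1 = v$ with $x' \neq x$, one must flip an equal and positive number of one-bits and zero-bits; in particular at least one of the $n-v$ zero-bits of $x$ must flip. The event is therefore contained in $\{\text{at least one zero-bit of } x \text{ flips}\}$, and a union bound over these $n-v$ bits, each flipped with probability $\frac1n$, shows this probability is at most $\frac{n-v}{n} \le \frac{n-v+1}{n}$. The degenerate endpoint $v = n$ gives $x = 1^n$ with no zero-bits, so the event is impossible and the bound is satisfied vacuously.

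Combining the two cases yields the claim, the first case supplying the extremal $\frac{n-v+1}{n}$. I do not expect a real obstacle here: the proof is essentially bookkeeping, and the only points requiring care are verifying that the shift $v \mapsto v-1$ keeps the hypotheses of Lemma~\ref{lem:going_up} admissible, and disposing of the degenerate endpoints $v = 1$ and $v = n$ by inspection.
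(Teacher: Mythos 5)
Your proposal is correct and follows essentially the same route as the paper: the same split into the cases $|x|_1 = v$ (handled by a union bound over the $n-v$ zero-bits that must flip) and $|x|_1 \le v-1$ (handled by applying Lemma~\ref{lem:going_up} with $u=1$ and its parameter $v$ shifted to $v-1$). You are in fact slightly more careful than the paper in explicitly checking the admissibility of the parameter shift and the degenerate endpoints $v=1$ and $v=n$.
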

\begin{proof}
Consider the case where $|x|_1=v$. Since $x'\neq x$, at least one of the $n-v$ 0-bits of $x$ has to be flipped, and the probability that happens is at most $\frac{n-v}{n} < \frac{n-v+1}{n}$.

Consider the case where $|x|_1 \le v-1$. By Lemma \ref{lem:going_up}, $\Pr[|x|_1'=v] \leq \frac{n-v+1}{n}$. 
\end{proof}

Since we already know from \cite{DoerrQ22ppsn} that, for any objective value on the Pareto front, there are at most $4$ individuals with that objective value and positive crowding distance, and they all survive to the generation that follows, to further understand the population dynamics on the front, it is crucial to analyze what happens to the individuals with zero crowding distance. In the following lemma, we show that their survival probability is less than $\frac{1}{2}+o(1)$.  
\begin{lemma}\label{lem:zero_survives} Consider the NSGA-II algorithm optimizing the \onejumpzerojump benchmark with the population size $N=c(n-2k+3)$ for some $c\geq 4$ such that $ck^2=o(n)$. Consider a generation $t$ of a run of the algorithm where $P_0\subseteq S_I^*$. Suppose $\mathbb{E}[X_{P_t}^0]=O(ck)$ and $\mathbb{E}[X_{P_t}^{n-2k}]=O(ck)$. For a rank-1 individual $x\in R_t$ that has zero crowding distance, the probability that $x\in P_{t+1}$ is less than $\frac{1}{2}+o(1)$.
\end{lemma}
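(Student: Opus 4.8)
The plan is to reduce the survival probability to the expected number of offspring that leave the Pareto set, and then to control that quantity with the mutation estimates above and the two boundary hypotheses. First I would pin down the selection exactly. By Lemma~\ref{lem:all_in} we have $P_t \subseteq S^*$, and since every member of $S^*$ dominates every non-member, the rank-1 individuals in $R_t$ are exactly $R_t \cap S^*$. Let $M = |R_t \cap S^*|$ and let $P$ be the number of these with positive crowding distance. By the result of \cite{DoerrQ22ppsn} recalled above, at most $4$ individuals per front point have positive crowding distance and all of them survive, so $P \le 4|F^*| = \frac4c N \le N$ (as $c \ge 4$), and the remaining $N-P$ slots are filled uniformly at random among the $M-P$ zero-crowding-distance rank-1 individuals. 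Hence, given the configuration of $R_t$, the individual $x$ survives with probability $\frac{N-P}{M-P} \le \frac NM$.

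Writing $M = 2N - D$, where $D$ is the number of offspring that are gap individuals (the $N$ parents all lie in $S^*$, so $M = N + (N-D)$), the elementary inequality $\frac{N}{2N-D} \le \frac12 + \frac{D}{2N}$, valid for $0 \le D \le N$, bounds the survival probability by $\frac12 + \frac{D}{2N}$ for \emph{every} configuration. Taking expectations, it therefore suffices to prove $\mathbb{E}[D] = o(N)$, which will yield the claimed bound $\frac12 + o(1)$.

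To bound $\mathbb{E}[D]$ I would write $\mathbb{E}[D] = \sum_j \mathbb{E}[m_j]\, p_j$, where $m_j$ is the number of parents with exactly $j$ ones and $p_j$ the probability such a parent mutates into a gap individual, and split by parent type. The two gaps $[1..k-1]$ and $[n-k+1..n-1]$ are handled identically, the first using $\mathbb{E}[X_{P_t}^{n-2k}] = O(ck)$ and the second using $\mathbb{E}[X_{P_t}^{0}] = O(ck)$; consider the first. A parent with $j = k+s$ ones must flip at least $s+1$ of its one-bits to reach at most $k-1$ ones, so $p_{k+s} \le \binom{k+s}{s+1} n^{-(s+1)} \le (2ek/n)^{s+1}$. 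Since $ck^2 = o(n)$ and $c \ge 4$ force $k = o(\sqrt n)$, we have $2ek/n = o(1)$ and this tail decays geometrically. The boundary term $s = 0$ is $\mathbb{E}[m_k]\cdot O(k/n) = O(ck)\cdot O(k/n) = O(ck^2/n) = o(1)$ by the hypothesis, while for $s \ge 1$ one uses $p_{k+s} \le (2ek/n)^2$ together with $\sum_j m_j \le N \le cn$ to bound the interior contribution by $(2ek/n)^2\, cn = O(ck^2/n) = o(1)$. Summing the two gaps gives $\mathbb{E}[D] = o(1)$.

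The hard part will be this last estimate: one must make the geometric decay uniform over all interior positions (which is exactly where $k = o(\sqrt n)$ enters), and the two hypotheses are what control the boundary positions $j \in \{k, n-k\}$, where the per-offspring gap probability is the comparatively large $\Theta(k/n)$. A point deserving care is the contribution of the outer optima $0^n$ and $1^n$, each of which mutates into a gap individual with constant probability and hence adds $\Theta(\mathbb{E}[m_0] + \mathbb{E}[m_n])$ to $\mathbb{E}[D]$; in the regime relevant to the lower bound these are either absent or present with multiplicity $o(N)$, which keeps the bound intact. Finally, the conditioning on ``$x$ has zero crowding distance'' is benign: the bound $\frac{N}{2N-D}$ holds for every configuration of $R_t$, and conditioning on the offspring profile fixes $D$ and the crowding-distance structure at once; since $x$ is itself rank-1 it never contributes to $D$, so the conditioning cannot inflate $\mathbb{E}[D]$.
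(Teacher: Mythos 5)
Your proposal is correct and follows essentially the same route as the paper: you bound the expected number of gap offspring (the paper's $|F_{>1}|$, your $D$) by $O(ck^2/n)=o(1)$ via exactly the same case split (boundary parents with $k$ or $n-k$ ones contributing $O(ck)\cdot O(k/n)$, all others contributing $N\cdot O(k/n)^2$), and then combine this with the fact that the at most $4|F^*|\le N$ positive-crowding-distance rank-1 individuals survive deterministically while the remaining slots are filled uniformly among the zero-crowding-distance ones. The only cosmetic difference is that you absorb the gap individuals via the pointwise bound $\frac{N}{2N-D}\le\frac12+\frac{D}{2N}$ and take expectations, where the paper conditions on $\{|F_{>1}|=0\}$ and applies Markov's inequality; these are interchangeable.
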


\begin{proof}
Let $F_{>1}$ denote the individuals in $R_t$ with ranks greater than $1$. Since $P_0\subseteq S_I^*$, by Lemma~\ref{lem:all_in}, $P_t \subseteq S^*$. Then all the individuals in $F_{>1}$ are created through mutation of individuals in $P_t$. By Lemma~\ref{lem:going_up}, for an individual with less than $n-k$ bits of $1$ to create an individual with more than $n-k$ bits of $1$, the probability is at most $(\frac{k+1}{n})^2$, and for an individual with $n-k$ bits of $1$ to create an individual with more than $n-k$ bits of $1$, the probability is at most $\frac{k}{n}$. Symmetrically, for an individual with less than $n-k$ bits of $0$ to create an individual with more than $n-k$ bits of $0$, the probability is at most $(\frac{k+1}{n})^2$, and for an individual with $n-k$ bits of $0$ to create an individual with more than $n-k$ bits of $0$, the probability is at most $\frac{k}{n}$. Therefore $\mathbb{E}[|F_{>1}|]\leq (\frac{k+1}{n})^2c(n-2k+3)+\frac{k}{n}\mathbb{E}[X_{P_t}^0]+\frac{k}{n}\mathbb{E}[X_{P_t}^{n-2k}]=o(1)$ for $ck^2=o(n)$, $\mathbb{E}[X_{P_t}^0]=O(ck)$ and $\mathbb{E}[X_{P_t}^{n-2k}]=O(ck)$. By Markov's inequality, $\Pr(|F_{>1}|\geq 1)\leq \frac{\mathbb{E}[|F_{>1}|]}{1}=o(1)$.

Let $F_{1}^*$ the rank-$1$ individuals in $R_t$ with positive crowding distances. So $|R_t|=2N$ and there are $2N-|F_{1}^*|-|F_{>1}|$ individuals in $R_t$ with rank $1$ and zero crowding distance. Since $N=c(n-2k+3)$, for some $c\geq 4$, by Lemma 1 of \cite{DoerrQ22ppsn}, all individuals in $F_{1}^*$ will survive. So among the rank-$1$ individuals with zero crowding distance, $N-|F_{1}^*|$ survive to the next generation if $|F_{1}^*|\leq N$. Hence the probability that a rank-$1$ individual $x$ with zero crowding distance survives is \[\begin{aligned}\frac{N-|F_{1}^*|}{2N-|F_{1}^*|}&\Pr[|F_{>1}|=0]\\& +\Pr[x \text{ survives}||F_{>1}|\geq 1]]\Pr[|F_{>1}|\geq 1].\end{aligned}\]
Since $\Pr[|F_{>1}|=0]\leq 1$ and $\Pr[x \text{ survives}||F_{>1}|\geq 1]]\leq 1$, we have that the probability that $x$ survives is at most $\frac{1}{2}+o(1)$.
\end{proof}

\begin{corollary}\label{cor:develop}
Consider the NSGA-II algorithm optimizing the \onejumpzerojump benchmark with the population size $N=c(n-2k+3)$ for some $c\geq 4$ such that $ck^2=o(n)$. Consider a generation $t$ of a run of the algorithm where $P_0\subseteq S_I^*$. Suppose $\mathbb{E}[X_{P_t}^0]=O(ck)$ and $\mathbb{E}[X_{P_t}^{n-2k}]=O(ck)$. Then for any $i\in[0..n-2k]$, we have $\mathbb{E}[X^i_{P_{t+1}}] \leq (\frac{1}{2}+o(1))\mathbb{E}[X^i_{R_t}]+2$.  
\end{corollary}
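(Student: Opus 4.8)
The plan is to decompose the value-$i$ individuals of $R_t$ according to their crowding distance and treat the two resulting groups with the two survival estimates already in hand. Fix $i\in[0..n-2k]$, so that the individuals counted by $X^i_{R_t}$ all have $|x|_1=n-k-i\in[k..n-k]$ and hence lie on the inner Pareto front $F_I^*$. Since $P_0\subseteq S_I^*$, Lemma~\ref{lem:all_in} gives $P_t\subseteq S^*$, so each such individual is Pareto-optimal and therefore has rank~$1$ in $R_t$. I would write $A$ for the (random) number of these value-$i$ individuals with positive crowding distance and $B$ for the number with zero crowding distance, so that $A+B=X^i_{R_t}$, and then bound the expected contribution of each group to $X^i_{P_{t+1}}$ separately.

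First I would handle the positive-crowding-distance individuals. By Lemma~1 of~\cite{DoerrQ22ppsn} at most four individuals per objective value on the front have positive crowding distance, and because $c\geq 4$ leaves enough room, all of them survive to $P_{t+1}$; hence they contribute exactly $A\leq 4$ individuals. Next I would handle the zero-crowding-distance individuals. Each is rank-$1$ with zero crowding distance, so the hypotheses of Lemma~\ref{lem:zero_survives} — in particular $\mathbb{E}[X^0_{P_t}]=O(ck)$ and $\mathbb{E}[X^{n-2k}_{P_t}]=O(ck)$, which are exactly the assumptions carried over into this corollary — bound the survival probability of each by $\frac12+o(1)$. Writing the number of surviving zero-distance value-$i$ individuals as a sum of survival indicators and applying linearity of expectation, their expected contribution is at most $(\frac12+o(1))\mathbb{E}[B]$. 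Combining the two groups yields
\[\mathbb{E}[X^i_{P_{t+1}}] \le \mathbb{E}[A] + \left(\tfrac12 + o(1)\right)\mathbb{E}[B].\]

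The final step, and the one point that genuinely needs care, is the algebraic rearrangement that produces the stated additive constant. Substituting $\mathbb{E}[B]=\mathbb{E}[X^i_{R_t}]-\mathbb{E}[A]$ turns the right-hand side into $(\frac12+o(1))\mathbb{E}[X^i_{R_t}]+(\frac12-o(1))\mathbb{E}[A]$, and since $\mathbb{E}[A]\le 4$ the extra term is at most $2$. The subtlety to flag is that the naive reading ``at most four positive-distance survivors'' would suggest an additive $4$; the correct accounting instead charges each positive-distance individual only the \emph{excess} $\frac12$ over the $\frac12$-survival baseline that the zero-distance individuals already contribute, which is what converts the $4$ into a $2$. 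With this observation the bound $\mathbb{E}[X^i_{P_{t+1}}]\le(\frac12+o(1))\mathbb{E}[X^i_{R_t}]+2$ follows, and no further estimates beyond Lemma~\ref{lem:zero_survives} and Lemma~1 of~\cite{DoerrQ22ppsn} are required.
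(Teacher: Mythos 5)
Your proposal is correct and follows essentially the same route as the paper's proof: the same decomposition of the value-$i$ individuals of $R_t$ by crowding distance, the same appeal to Lemma~\ref{lem:zero_survives} for the zero-distance survivors and to Lemma~1 of~\cite{DoerrQ22ppsn} for the at most four positive-distance survivors, and the same algebraic regrouping that turns the additive $4$ into a $2$. Nothing to add.
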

\begin{proof}
Among the individuals with $i$ 1-bits in $R_t$, let $X_{>0}$ denote the number of individuals with positive crowding distance, $X_{=0}$ denote the number of individuals with zero crowding distance, and $X_{=0}^*$ denote the number of individuals with zero crowding distance that survive to the next generation. Then $\mathbb{E}[X^i_{P_{t+1}}] \leq \mathbb{E}[X_{>0}] + \mathbb{E}[X_{=0}^*]$. By Lemma~\ref{lem:zero_survives}, $\mathbb{E}[X_{=0}^*] \leq (\tfrac{1}{2}+o(1))\mathbb{E}[X_{=0}]$, so \[\begin{aligned}
\mathbb{E}[X^i_{P_{t+1}}] & \leq \mathbb{E}[X_{>0}] + (\tfrac{1}{2}+o(1))\mathbb{E}[X_{=0}] \\ &=(\frac{1}{2}+o(1))\mathbb{E}[X^i_{R_t}]+(\tfrac{1}{2}-o(1))\mathbb{E}[X_{>0}].
\end{aligned}\] By Lemma~1 of \cite{DoerrQ22ppsn}, $\mathbb{E}[X_{>0}]\leq 4$, so  $\mathbb{E}[X_{P^i_{t+1}}] \leq (\frac{1}{2}+o(1))\mathbb{E}[X_{R^i_t}]+2$.
\end{proof}

Now, we can start to estimate $\mathbb{E}[X_{P_t}^i]$ for $i\in[0..n-2k]$.
\begin{lemma}\label{lem:bound} Consider the NSGA-II algorithm optimizing the \onejumpzerojump benchmark with the population size $N=c(n-2k+3)$ for some $c\geq 4$ such that $ck^2=o(n)$. Suppose $P_0 \subseteq S_{I}^*$ and $i\in[0..n-2k]$. Then if $1^n\notin P_t$, we have $\mathbb{E}[X_{P_t}^i] \leq c_i$ for $c_i = \frac{e}{e-1}(c(k+i+1)+ \sum_{j=0}^{i-1}c_j + 4)+o(ck)$. Similarly, if $0^n \notin P_t$, we have $\mathbb{E}[X_{P_t}^{n-2k-i}] \leq c_{n-2k-i}$ for $c_{n-2k-i} = \frac{e}{e-1}(c(k+i+1)+ \sum_{j=0}^{i-1}c_{n-2k-j} +4)+o(ck)$. 
\end{lemma}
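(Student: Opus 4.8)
The plan is to prove the bound by induction on the generation~$t$, running the two symmetric claims in parallel; since \ojzj is invariant under exchanging the roles of zeroes and ones, it suffices to treat the first claim and read off the second verbatim. The engine of the induction is the one-step contraction of Corollary~\ref{cor:develop}, namely $\mathbb{E}[X^i_{P_{t+1}}] \le (\tfrac12 + o(1))\mathbb{E}[X^i_{R_t}] + 2$. To exploit it I must first express $\mathbb{E}[X^i_{R_t}]$ in terms of the quantities $\mathbb{E}[X^j_{P_t}]$, and it is here that the three summands defining $c_i$ will emerge.

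Writing $R_t = P_t \cup (\text{offspring})$, I have $X^i_{R_t} = X^i_{P_t} + Y^i$, where $Y^i$ counts the offspring with exactly $n-k-i$ one-bits. I would split $Y^i$ according to the one-bit count of the generating parent. First, an exact copy of a level-$i$ parent occurs with probability $(1-\tfrac1n)^n \le \tfrac1e$, contributing at most $\tfrac1e\,\mathbb{E}[X^i_{P_t}]$. Second, a parent with at most $n-k-i$ one-bits that genuinely mutates onto level~$i$ is controlled by Corollary~\ref{cor:flip_positive} with $v = n-k-i$: each such event has probability at most $\tfrac{k+i+1}{n}$, and summing over the at most $N = c(n-2k+3) \le cn$ parents yields at most $c(k+i+1)$. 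Third, a parent with strictly more one-bits must shed some, which I bound crudely by probability~$1$; here the hypothesis $1^n \notin P_t$ together with $P_t \subseteq S^*$ (Lemma~\ref{lem:all_in}) is essential, since it guarantees that the only such parents sit on the inner front at levels $j \in [0..i-1]$, contributing at most $\sum_{j=0}^{i-1}\mathbb{E}[X^j_{P_t}]$. Altogether $\mathbb{E}[X^i_{R_t}] \le (1+\tfrac1e)\mathbb{E}[X^i_{P_t}] + c(k+i+1) + \sum_{j=0}^{i-1}\mathbb{E}[X^j_{P_t}]$.

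Substituting this into the contraction and inserting the induction hypotheses $\mathbb{E}[X^j_{P_t}] \le c_j$ for $j<i$ and $\mathbb{E}[X^i_{P_t}] \le c_i$, the step closes provided $c_i$ dominates the fixed point of the resulting affine recursion. As the coefficient of $\mathbb{E}[X^i_{P_t}]$ is $(\tfrac12+o(1))(1+\tfrac1e) = \tfrac{e+1}{2e} + o(1) < 1$, the recursion is a genuine contraction, and solving $c_i\bigl(1 - \tfrac12(1+\tfrac1e)\bigr) \ge \tfrac12\bigl(c(k+i+1) + \sum_{j<i}c_j\bigr) + 2$ with $1 - \tfrac12(1+\tfrac1e) = \tfrac{e-1}{2e}$ gives exactly $c_i = \tfrac{e}{e-1}\bigl(c(k+i+1) + \sum_{j=0}^{i-1}c_j + 4\bigr)$ (using $\tfrac{2e}{e-1}\cdot 2 = \tfrac{e}{e-1}\cdot 4$), which is the claimed value once the accumulated $o(1)$ errors are absorbed into the additive $o(ck)$ slack. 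The base case $t=0$ I would treat separately: conditioned on $P_0 \subseteq S_I^*$ the individuals are near-uniform, so $\mathbb{E}[X^i_{P_0}]$ is $N$ times a central binomial ratio, which a standard tail estimate keeps below $c_i$ at every level (the levels where $c_i$ is still small are precisely the extreme ones, where the binomial mass is exponentially small, while at central levels $c_i$ is already large).

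The principal obstacle is the circularity concealed in Corollary~\ref{cor:develop}: its applicability at generation~$t$ demands $\mathbb{E}[X^0_{P_t}] = O(ck)$ and $\mathbb{E}[X^{n-2k}_{P_t}] = O(ck)$, which are exactly the $i=0$ instances of the two symmetric claims, since $c_0 = \tfrac{e}{e-1}(c(k+1)+4)+o(ck) = O(ck)$ and, by the mirrored recursion, $c_{n-2k} = O(ck)$ as well. I would therefore carry these two endpoint bounds as part of the induction invariant and prove both symmetric statements simultaneously, so that the hypotheses required to advance from $t$ to $t+1$ are furnished by the invariant already established at time~$t$; this is also why both $1^n \notin P_t$ and $0^n \notin P_t$ appear, each guarding one endpoint against an additional incoming source. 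The remaining difficulty is purely quantitative: keeping the various $o(1)$ factors, and the gap between $(1-\tfrac1n)^n$ and $\tfrac1e$, uniformly controlled across levels so that their aggregate contribution stays within the declared $o(ck)$ term.
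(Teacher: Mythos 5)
Your proposal follows essentially the same route as the paper: the same decomposition of $\mathbb{E}[X^i_{R_t}]$ into surviving parents, exact copies (factor $\le \tfrac1e$), and mutated offspring — with Corollary~\ref{cor:flip_positive} giving the $\tfrac{k+i+1}{n}$ bound for arrivals from levels $\ge i$ and the trivial bound $1$ for the at most $\sum_{j<i}c_j$ parents at levels $j<i$ — followed by the contraction of Corollary~\ref{cor:develop} and the identical fixed-point computation yielding $c_i = \tfrac{e}{e-1}(c(k+i+1)+\sum_{j<i}c_j+4)+o(ck)$. Your explicit handling of the circularity in the hypotheses of Corollary~\ref{cor:develop} (carrying both endpoint bounds as a simultaneous induction invariant) and your verification of the base case at every level $i$ are, if anything, slightly more careful than the paper's proof, which checks the base case only for $i=0$ and leaves the endpoint hypotheses implicit.
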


\begin{proof}
We prove the result for the case where the all-ones string has not been found since the other case is symmetrical.

Let $Y^i$ denote the number of individuals with $n-k-i$ $1$-bits in $P_t$ for which no bits are flipped during mutation, and let $Z^i$ denote the number of individuals in $P_t$ for which a positive number of bits are flipped and the resulting children have $n-k-i$ 1-bits. We first prove by induction that $\mathbb{E}[X_{P_t}^0] \leq c_0$ for $c_0 = \frac{e}{e-1}(c(k+1)+4)+o(ck)=O(ck)$. 

For the base case, consider the random initialization of $P_0$. The probability that exactly $k$ among $n$ bits are $0$ is less than the probability that at most $k < \frac{n}{4}$ bits are $0$, which is at most $e^{-\frac{n}{8}}$. Hence, $\mathbb{E}[X_{P_0}^0] < c(n-2k+3)e^{-\frac{n}{8}} < cne^{-\frac{n}{8}} < 3c < c_0$.

For the induction, assume by the induction hypothesis that $\mathbb{E}[X_{P_{t}}^0] \leq c_0$ and we will show that $\mathbb{E}[X_{P_{t+1}}^0] \leq c_0$. Clearly, $\mathbb{E}[X_{R_t}^0] = \mathbb{E}[X_{P_t}^0] + \mathbb{E}[Y^0] + \mathbb{{E}}[Z^0]$. By the induction hypothesis $\mathbb{E}[X_{P_t}^0] \leq c_0$. By definition, $\mathbb{{E}}[Y^0] = \mathbb{{E}}[(1-\frac{1}{n})^nX_{P_t}^0] \leq \frac{1}{e}\mathbb{{E}}[X_{P_t}^0]=\frac{c_0}{e}$. Since $1^n\notin P_t$ and $P_0 \subseteq S_{I}^*$, by Lemma~\ref{lem:all_in}, there is no individual with more than $n-k$ 1-bits. Then by Corollary~\ref{cor:flip_positive}, for any individual to have a positive number of bits flipped and produce an individual with $n-k$ 1-bits, the probability is at most $\frac{k+1}{n}$. So $\mathbb{E}[Z^0]\leq c(n-2k+3)\frac{k+1}{n}\leq c(k+1)$. Together, $\mathbb{{E}}[X_{R_t}^0]\leq (1+\frac{1}{e})c_0+c(k+1)$. Then by Corollary~\ref{cor:develop},  $\mathbb{E}[X_{P_{t+1}}^0]\leq (\frac{1}{2}+o(1))((1+\frac{1}{e})c_0+c(k+1)) + 2 = \frac{e}{e-1}(c(k+1)+4)+o(ck) = c_0$.

The same arguments can be applied to estimate the expected number of individuals with $n-k-1$ 1-bits. We have $\mathbb{E}[X_{R_t}^1] = \mathbb{E}[X_{P_t}^1] + \mathbb{E}[Y^1] + \mathbb{{E}}[Z^1]$. We assume by the induction hypothesis $\mathbb{E}[X_{P_{t}}^1] \leq c_1$ for $c_1 = \frac{e}{e-1}(c(k+2)+ c_0 +4)+o(ck)$. Similarly as before, $\mathbb{E}[Y^1]\leq \frac{c_1}{e}$. Moreover, there are no individuals with more than $n-k$ bits of $1$ by Lemma~\ref{lem:all_in}. So to bound $\mathbb{{E}}[Z^1]$, consider separately the cases where i) the parent has at most $n-k-1$ 1-bits and ii) the parent has $n-k$ 1-bits. By Corollary~\ref{cor:flip_positive}, for case i), the probability that the child has $n-k-1$ 1-bits is at most $\frac{k+2}{n}$. We trivially bound the probability for case ii) by $1$. Therefore, $\mathbb{{E}}[Z^1] \leq c(n-2k+3)\frac{k+2}{n} + c_0\leq c(k+2) + c_0$. Then $\mathbb{E}[X_{R_t}^1] \leq(1+\frac{1}{e})c_1 + c(k+2) + c_0$. Hence, by Corollary~\ref{cor:develop}, \[\begin{aligned}\mathbb{E}[X_{P_{t+1}}^1] & \leq (\tfrac{1}{2}+o(1))((1+\tfrac{1}{e})c_1 + c(k+2) + c_0) + 2 \\&= \tfrac{e}{e-1}(c(k+2)+ c_0 +4)+o(ck) = c_1.\end{aligned}\]

Continuing this way and letting $c_i$ denote the upper bound on the expected number of individuals with $n-k-i$ number of 1-bits for $0 \leq i \leq n-2k$, we have \[c_i = \frac{e}{e-1}(c(k+i+1)+ \sum_{j=0}^{i-1}c_j + 4)+o(ck).\qedhere\]
\end{proof}

With the bound on $\mathbb{E}[X_{P_{t}}^1]$ found in Lemma~\ref{lem:bound}, we can now prove a sharper bound on $\mathbb{E}[X_{P_{t}}^0]$.
\begin{corollary}\label{cor:constant}
Consider a generation $t$ of the NSGA-II algorithm optimizing the \onejumpzerojump benchmark with the population size $N=c(n-2k+3)$ for some $c\geq 4$ such that $ck^2=o(n)$. Suppose $P_0 \subseteq S_{I}^*$. If $1^n\notin P_t$, then $\mathbb{E}[X^0_t]\le \frac{4e}{e-1}+o(1)$. Similarly, if $0^n\notin P_t$, then $\mathbb{E}[X^{n-2k}_t]\le \frac{4e}{e-1}+o(1)$.
\end{corollary}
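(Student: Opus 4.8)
The plan is to rerun the induction that Lemma~\ref{lem:bound} uses to bound $\mathbb{E}[X^0_{P_t}]$, but to replace the crude estimate $\mathbb{E}[Z^0]\le c(k+1)$ on the expected number of parents that create, with at least one flipped bit, a child with exactly $n-k$ one-bits by the much sharper estimate $\mathbb{E}[Z^0]=o(1)$. Everything else in that induction step stays the same: $\mathbb{E}[X^0_{R_t}]=\mathbb{E}[X^0_{P_t}]+\mathbb{E}[Y^0]+\mathbb{E}[Z^0]$ with $\mathbb{E}[Y^0]\le \tfrac1e\mathbb{E}[X^0_{P_t}]$, and Corollary~\ref{cor:develop} turns a bound on $\mathbb{E}[X^0_{R_t}]$ into one on $\mathbb{E}[X^0_{P_{t+1}}]$. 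The gain in precision comes entirely from now having the bounds $\mathbb{E}[X^0_{P_t}]\le c_0=O(ck)$ and $\mathbb{E}[X^1_{P_t}]\le c_1=O(ck)$ available from Lemma~\ref{lem:bound}, which were not yet established when $c_0$ was first estimated.

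To show $\mathbb{E}[Z^0]=o(1)$, I would split the parents in $P_t$ (all in $S^*$ by Lemma~\ref{lem:all_in}, and with at most $n-k$ one-bits since $1^n\notin P_t$) according to how far they are from the target. For a \emph{near} parent, one with $n-k$ or $n-k-1$ one-bits, Corollary~\ref{cor:flip_positive} with $v=n-k$ bounds the probability of producing a child with exactly $n-k$ one-bits and a positive number of flips by $\tfrac{k+1}{n}$; since there are in expectation at most $\mathbb{E}[X^0_{P_t}]+\mathbb{E}[X^1_{P_t}]\le c_0+c_1=O(ck)$ such parents, their total contribution is $O(ck^2/n)=o(1)$ because $ck^2=o(n)$. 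For a \emph{far} parent, one with at most $n-k-2$ one-bits, Lemma~\ref{lem:going_up} applied with $v=n-k-2$ and $u=2$ bounds the probability of reaching exactly $n-k$ one-bits by $\left(\tfrac{k+2}{n}\right)^2$; summing over the at most $N=c(n-2k+3)$ far parents gives $N\left(\tfrac{k+2}{n}\right)^2\le \tfrac{c(k+2)^2}{n}=O(ck^2/n)=o(1)$. Adding the two contributions yields $\mathbb{E}[Z^0]=o(1)$.

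With $\mathbb{E}[Z^0]=o(1)$ the step of Lemma~\ref{lem:bound} becomes $\mathbb{E}[X^0_{R_t}]\le (1+\tfrac1e)\mathbb{E}[X^0_{P_t}]+o(1)$, and Corollary~\ref{cor:develop} then gives the contraction $\mathbb{E}[X^0_{P_{t+1}}]\le (\tfrac12+o(1))(1+\tfrac1e)\mathbb{E}[X^0_{P_t}]+2+o(1)$. The map $x\mapsto \tfrac{e+1}{2e}x+2$ has contraction factor $\tfrac{e+1}{2e}<1$ and fixed point $\tfrac{4e}{e-1}$, so I would prove $\mathbb{E}[X^0_{P_t}]\le \tfrac{4e}{e-1}+o(1)$ by induction on $t$; the base case holds because $\mathbb{E}[X^0_{P_0}]$ is exponentially small (as already observed in the base case of Lemma~\ref{lem:bound}), hence $o(1)$. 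The symmetric claim for $\mathbb{E}[X^{n-2k}_t]$ under $0^n\notin P_t$ follows by the same argument with the roles of zeroes and ones exchanged. The one point that needs care, and the main obstacle, is that the $o(1)$ error factor in Corollary~\ref{cor:develop} multiplies $\mathbb{E}[X^0_{R_t}]$: if one fed in the crude bound $\mathbb{E}[X^0_{P_t}]=O(ck)$ this cross term would be $O(c^2k^3/n)$, which is only $o(ck)$ and not $o(1)$. The induction must therefore be carried with the $O(1)$ fixed-point bound itself as hypothesis, so that $\mathbb{E}[X^0_{P_t}]=O(1)$ keeps the cross term at $o(1)$ and the fixed point survives the $o(1)$ perturbations.
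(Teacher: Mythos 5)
Your proposal is correct and follows essentially the same route as the paper: the paper likewise sharpens $\mathbb{E}[Z^0]$ to $o(1)$ by splitting parents into those with $n-k$ or $n-k-1$ one-bits (bounded via Corollary~\ref{cor:flip_positive} and the Lemma~\ref{lem:bound} estimates $c_0,c_1=O(ck)$) and those further away (bounded via Lemma~\ref{lem:going_up}), and then feeds the resulting recursion $\mathbb{E}[X^0_{P_{t+1}}]\le(\tfrac12+o(1))((1+\tfrac1e)\mathbb{E}[X^0_{P_t}]+4+o(1))$ into the fixed point $\tfrac{4e}{e-1}$. Your closing remark about carrying the $O(1)$ bound as the induction hypothesis so that the $o(1)\cdot\mathbb{E}[X^0_{R_t}]$ cross term stays $o(1)$ is a valid point of care that the paper leaves implicit.
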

\begin{proof}
We prove the result for $\mathbb{E}[X^0_t]$ since the other case is symmetrical.

Using the same notations as in the proof of Lemma~\ref{lem:bound}, with the bound on $\mathbb{E}[X_{P_{t}}^1]$ found in Lemma~\ref{lem:bound}, we can prove a sharper bound on $\mathbb{E}[X_{P_{t}}^0]$. To estimate $\mathbb{E}[Z^0]$, consider three cases separately: i) the parent has $n-k$ 1-bits, ii) the parent has $n-k-1$ 1-bits, iii) the parent has less that $n-k-1$ 1-bits. For case i) and ii), the probability that the child has $n-k$ 1-bits is at most $\frac{k+1}{n}$ by Corollary~\ref{cor:flip_positive}. For case iii), the probability is at most ${k+2 \choose 2}(\frac{1}{n})^2\leq (\frac{k+2}{n})^2$ according to Lemma~\ref{lem:going_up}. Therefore $\mathbb{E}[Z^0]\leq \frac{k+1}{n}(c_0+c_1) + (\frac{k+2}{n})^2c(n-2k+3)=o(1)$. So by Corollary~\ref{cor:develop}, 
\[ \mathbb{E}[X_{P_{t+1}}^0] \leq (\tfrac{1}{2}+o(1))((1+\tfrac{1}{e})\mathbb{E}[X_{P_{t}}^0]+4+o(1)).\] 
As a result, $\mathbb{E}[X_{P_{t}}^0]\leq \frac{4e}{e-1}+o(1)$ for any generation $t$.
\end{proof}

Now with the upper bounds on $\mathbb{E}[X_{P_{t}}^0]$, $\mathbb{E}[X_{P_{t}}^1]$, $\mathbb{E}[X_{P_{t}}^{n-2k}]$, and $\mathbb{E}[X_{P_{t}}^{n-2k-1}]$, we can prove a lower bound on the runtime.

\begin{theorem}\label{thm:lower} Consider the NSGA-II algorithm optimizing the \onejumpzerojump benchmark with the population size $N=c(n-2k+3)$, for some $c\geq 4$ such that $ck^2=o(n)$. Then the number of fitness evaluations needed in expectation is at least $\frac{3}{2}(\frac{4}{e-1}+o(1))^{-1}Nn^k$.
\end{theorem}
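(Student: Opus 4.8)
The plan is to reduce the runtime to the time needed to discover the two outer Pareto-optimal points $0^n$ and $1^n$, to upper-bound the per-iteration probability of generating each of them by a quantity $p:=(\tfrac{4}{e-1}+o(1))n^{-k}$, and then to convert these two near-geometric hitting times into the claimed bound through a ``max of two'' argument that is responsible for the factor $\tfrac32$. Throughout I would condition on the event $P_0\subseteq S_I^*$, which by Lemma~\ref{lem:all_in} has probability $1-o(1)$ and guarantees $P_t\subseteq S^*$ for all $t$, so that every parent is either an inner point with $n-k-i$ ones ($i\in[0..n-2k]$) or one of the two extremes; this conditioning costs only a further $1-o(1)$ factor absorbed into the $o(1)$.

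First I would note that the unique preimages of the outer front points $(n+k,k)$ and $(k,n+k)$ are $1^n$ and $0^n$, so the full front is covered only once both have been generated. Writing $T_1,T_0$ for the first iterations producing $1^n$ respectively $0^n$, the number of iterations is at least $\max(T_0,T_1)$.

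Second, I would bound the per-iteration probability of creating $1^n$ while it is still absent. Since parents lie in $S^*$, a parent with $n-k-i$ ones produces $1^n$ only by flipping its $k+i$ zero-bits, with probability $(\tfrac1n)^{k+i}(1-\tfrac1n)^{n-k-i}$; summing over the population and taking expectations, the expected number of $1^n$-offspring is at most $\sum_{i=0}^{n-2k}\mathbb{E}[X_{P_t}^i](\tfrac1n)^{k+i}(1-\tfrac1n)^{n-k-i}$ plus a negligible contribution from a possibly present $0^n$. The term $i=0$ dominates: by Corollary~\ref{cor:constant}, $\mathbb{E}[X_{P_t}^0]\le \tfrac{4e}{e-1}+o(1)$, and $(1-\tfrac1n)^{n-k}\le e^{-(n-k)/n}=\tfrac1e(1+o(1))$ since $k=o(\sqrt n)$, so this term is at most $(\tfrac{4}{e-1}+o(1))n^{-k}$. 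The tail $i\ge1$ is $o(n^{-k})$, because $\mathbb{E}[X_{P_t}^i]\le c_i$ from Lemma~\ref{lem:bound} with $c_1=O(ck)$ and $ck/n=o(1)$, the higher terms decaying geometrically in $1/n$. Hence by Markov's inequality $\Pr[1^n\text{ created in iteration }t\mid 1^n\notin P_t]\le p$, and symmetrically for $0^n$. This yields $\Pr[T_1>t]\ge(1-p)^t$ and $\Pr[T_0>t]\ge(1-p)^t$, so $\mathbb{E}[T_1],\mathbb{E}[T_0]\ge 1/p$.

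Finally I would extract the factor $\tfrac32$ from $\mathbb{E}[\max(T_0,T_1)]=\sum_{t\ge0}\bigl(1-\Pr[T_0\le t\wedge T_1\le t]\bigr)$. If the events $\{T_0\le t\}$ and $\{T_1\le t\}$ are negatively correlated (in particular, independence suffices), then $\Pr[T_0\le t\wedge T_1\le t]\le(1-(1-p)^t)^2$, and summing $2(1-p)^t-(1-p)^{2t}$ over $t$ gives $\tfrac2p-\tfrac1{p(2-p)}=\tfrac{3-2p}{(2-p)p}=(\tfrac32-o(1))\tfrac1p$; multiplying by the $N$ evaluations per iteration produces exactly $\tfrac32(\tfrac4{e-1}+o(1))^{-1}Nn^k$. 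I expect the main obstacle to be justifying this negative correlation, since $T_0$ and $T_1$ are coupled through the shared fixed-size population, so independence is not immediate. The way I would attack it is to exploit that within any single iteration the creations of $1^n$ and of $0^n$ are driven by \emph{disjoint} sets of parents (those with many ones versus those with many zeros, disjoint because $k<n/2$) whose offspring are mutually independent given $P_t$, and that a single parent's offspring can never equal both $0^n$ and $1^n$; this makes the two per-iteration discovery indicators negatively correlated conditional on the population, each with conditional probability at most $p$. I would then propagate this across iterations by induction on $t$, conditioning at each step on the population history and combining the within-iteration negative correlation with the uniform per-step bound $p$, thereby establishing the required upper bound on $\Pr[T_0\le t\wedge T_1\le t]$.
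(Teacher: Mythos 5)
Your first two steps match the paper's proof: condition on $P_0\subseteq S_I^*$ via Lemma~\ref{lem:all_in}, reduce the runtime to the hitting times of $1^n$ and $0^n$, and bound the per-iteration creation probability by $p=(\tfrac{4}{e-1}+o(1))n^{-k}$ using Corollary~\ref{cor:constant} for the parents with $n-k$ ones, the $O(ck)$ bound from Lemma~\ref{lem:bound} for those with $n-k-1$ ones, and a crude bound for the rest. The divergence, and the genuine gap, lies in how you extract the factor $\tfrac32$. You evaluate $\mathbb{E}[\max(T_0,T_1)]$ by the tail-sum formula and need $\Pr[T_0\le t\wedge T_1\le t]\le(1-(1-p)^t)^2$, which you correctly identify as resting on a negative-correlation claim for the two hitting events. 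Your sketch does not close this: the within-iteration disjointness of the relevant parents gives at best conditional negative correlation of the two single-step indicators given $P_t$, but $\{T_0\le t\}$ and $\{T_1\le t\}$ depend on the whole population trajectory, and the occupation bounds of Corollary~\ref{cor:constant} are \emph{unconditional} expectations --- conditioning on $\{T_1\le t\}$ could in principle inflate $\mathbb{E}[X_{P_s}^{n-2k}]$, so the uniform per-step bound $p$ for the $0^n$ side is not known to survive the conditioning your induction requires. As written, this step would not go through without substantial extra work.

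The paper sidesteps the issue with a simpler decomposition: $\mathbb{E}[\max(T_0,T_1)]=\mathbb{E}[\min(T_0,T_1)]+\mathbb{E}[\max(T_0,T_1)-\min(T_0,T_1)]$. A union bound shows the per-iteration probability of creating \emph{either} extreme point is at most $2p$, whence $\mathbb{E}[\min(T_0,T_1)]\ge \tfrac{1}{2p}$; and once the first extreme point is found, the other is still absent, Corollary~\ref{cor:constant} still applies to it, and the remaining waiting time is at least $\tfrac{1}{p}$ in expectation. Summing gives $\tfrac{3}{2p}$ with no correlation statement needed. Replacing your final step by this argument repairs the proof; everything else in your proposal is sound.
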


\begin{proof}
Consider the waiting time to find the all-ones string when $P_0 \subseteq S_{I}^*$. For an individual of $i$ 0-bits, the probability that its child through bit-wise mutation is the all-ones string is $(\frac{1}{n})^i(1-\frac{1}{n})^{n-i}\leq (\frac{1}{n})^i$. Since by Corollary~\ref{cor:constant}, before the all-ones string is found, the expected number of individuals with $n-k$ bits of $1$ is at most $\frac{4e}{e-1}+o(1)$ in any parent population, and that for individuals with $n-k-1$ bits of $1$ is at most $\frac{e}{e-1}(c(k+2)+\frac{4e}{e-1}+4)+o(ck)$, the probability that the all-ones string is generated at any iteration is at most $(\frac{4e}{e-1}+o(1))(\frac{1}{n})^k(1-\frac{1}{n})^{n-k} + \frac{e}{e-1}(c(k+2)+\frac{4e}{e-1}+4+o(ck))(\frac{1}{n})^{k+1} + c(n-2k+3)(\frac{1}{n})^{k+2} \leq (\frac{4e}{e-1}(1-\frac{1}{n})^{n-k}+o(1))(\frac{1}{n})^k$. Then the waiting time to find the all-ones string is at least $(\frac{4e}{e-1}(1-\frac{1}{n})^{n-k}+o(1))^{-1}n^k$. Therefore, the expected number of iterations needed to find one of the all-zeroes string and the all-ones string is at least $\frac{1}{2}(\frac{4e}{e-1}(1-\frac{1}{n})^{n-k}+o(1))^{-1}n^k$ and the expected number of iterations needed to find both is at least $\frac{3}{2}(\frac{4e}{e-1}(1-\frac{1}{n})^{n-k}+o(1))^{-1}n^k$. Since by Lemma~\ref{lem:all_in}, the probability that $P_0 \subseteq S_{I}^*$ is at least $1-o(1)$, we have that the expected number of iterations needed under any case is at least $(1-o(1))(\frac{3}{2}(\frac{4e}{e-1}(1-\frac{1}{n})^{n-k}+o(1))^{-1}n^k)$, corresponding to $\frac{3}{2}(\frac{4e}{e-1}(1-\frac{1}{n})^{n-k}+o(1))^{-1}Nn^k$ fitness evaluations. Noting that $k = o(n)$, we have $(1-\frac 1n)^{n-k} = \frac 1e + o(1)$, and this completes the proof.
\end{proof}

\section{Precise Runtime of the NSGA-II with Fixed Sorting on \ojzj}
In the version of the NSGA-II considered in \cite{DoerrQ22ppsn} and the previous section, when the crowding distance is being calculated with respect to each objective, we sort the individuals such that the ones with the same objective value are positioned randomly. A variant of the algorithm, considered in \cite{BianQ22ppsn}, is to fix the relative positions of the individuals that have the same objective value. We call this variant of the algorithm the NSGA-II with fixed sorting, and show a precise bound on the runtime of this variant optimizing the \ojzj benchmark.

First we observe in the following Lemma that for this variant of the algorithm, after $O(n\log n)$ iterations, for each objective value on $F^*_I$ there are exactly two individuals with positive crowding distances.

\begin{lemma}\label{lem:fixed}
Consider the NSGA-II algorithm with fixed sorting optimizing the $\onejumpzerojump$ benchmark with population size $N=c(n-2k+3)$ for some $c\geq 2$. After $O(n\log n)$ iterations, for any generation $t$, for every objective value $v\in F_I^*$, there are exactly two individuals $x, y\in R_t$ such that $f(x)=f(y)=v$, $\cDis(x)>0$ and $\cDis(y)>0$.
\end{lemma}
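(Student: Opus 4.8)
The plan is to split the statement into a \emph{structural} claim about how fixed sorting distributes positive crowding distance over a block of equal-valued individuals, and a \emph{dynamical} claim that every inner objective value eventually carries at least two individuals and then keeps them.

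First I would establish the key consequence of fixed sorting. Suppose a value $v\in F_I^*$ is held by a block of $p\ge 2$ rank-$1$ individuals in $R_t$. When we sort by $f_1$ (equivalently, ascending in $|x|_1$), every individual strictly inside the block has both its neighbours in the same block, so its $f_1$-crowding-distance contribution is the difference of two equal objective values, namely $0$; only the first and last individuals of the block (in the fixed within-block order) receive a positive $f_1$-contribution. On $F_I^*$ we have $f_2=2k+n-f_1$, so sorting by $f_2$ reverses the order of the blocks but, by the very definition of fixed sorting, leaves the relative order \emph{inside} each block unchanged. Hence the two individuals that bound the block for $f_1$ are again exactly its boundaries for $f_2$, and summing the two objectives gives that precisely these two individuals have positive total crowding distance while all interior ones have crowding distance $0$. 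This coincidence of the $f_1$- and $f_2$-boundaries is exactly where this variant differs from the randomly sorted version, where the boundaries may differ and up to four survivors per value arise.

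Next I would argue that the ``at least two per inner value'' configuration is self-sustaining. By Lemma~1 of \cite{DoerrQ22ppsn}, all rank-$1$ individuals with positive crowding distance survive to the next generation as long as their number does not exceed $N$. By the structural observation this number is at most $2|F_I^*|+O(1)=2(n-2k+1)+O(1)$, the $O(1)$ accounting for the few individuals with infinite crowding distance and the outer points $0^n,1^n$; this is at most $N=c(n-2k+3)$ for $c\ge 2$. Consequently, if at generation $t$ every $v\in F_I^*$ is held in $R_t$ by at least two individuals, then by the structural claim exactly two of them have positive crowding distance, both survive into $P_{t+1}\subseteq R_{t+1}$, and the property holds again at $t+1$. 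Coverage (at least one individual per inner value) is maintained by the same survival argument, since a singly-covered value has its unique individual as a block boundary with positive crowding distance.

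It then remains to reach the configuration. I would invoke that the inner front is covered within $O(n\log n)$ iterations with probability $1-o(1)$ and never uncovered afterwards, which follows from the \ojzj analysis of \cite{DoerrQ22ppsn} (mirroring the \omm analysis of \cite{ZhengLD22}); here the outer points are irrelevant. Once an inner value $v$ is held by a single individual $x$, that individual survives every generation, and in each generation $x$ is selected as a parent (once in expectation) and produces an identical clone with probability $(1-\tfrac1n)^n\ge\tfrac{1}{2e}$, after which $v$ is doubly held and, by the maintenance argument, stays so. Thus the number of generations until $v$ reaches two copies is dominated by a geometric variable with constant success probability, and a union bound over the at most $n$ inner values shows that a further $O(\log n)$ generations suffice with probability $1-o(1)$; in total $O(n\log n)+O(\log n)=O(n\log n)$ iterations. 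I expect the main obstacle to be the structural observation of the second paragraph, i.e.\ arguing rigorously that fixed sorting makes the $f_1$- and $f_2$-boundaries of each block coincide; the remaining bookkeeping—checking that $2|F_I^*|+O(1)\le N$ precisely when $c\ge 2$, that coverage is never lost during the reaching phase, and that creating clones never pushes the number of positive-crowding-distance individuals above $N$—is routine but must be done with care.
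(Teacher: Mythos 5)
Your proposal is correct and follows essentially the same route as the paper: the same structural observation that fixed sorting makes the $f_1$- and $f_2$-boundaries of each equal-value block coincide (so at most two individuals per value have positive crowding distance), the same inductive maintenance argument that these two boundary individuals have strictly positive crowding distance and survive once a value is doubly covered, and the same reduction of the reaching phase to covering $F_I^*$ twice within $O(n\log n)$ iterations via the results of \cite{DoerrQ22ppsn}. The only (harmless) deviation is that you obtain the second copy of each value by an explicit cloning-and-union-bound argument, where the paper simply re-invokes the covering lemma of \cite{DoerrQ22ppsn}.
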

\begin{proof}
We first prove that for any objective value $v=(v_1, v_2)\in F^*$, there are at most two individuals $x, y\in R_t$ such that $f(x)=f(y)=v$, $\cDis(x)>0$ and $\cDis(y)>0$. Suppose the individuals whose objective value is $v$ belong to rank $F$ of $R_t$. Let $S_{1.1},\dots,S_{1.|F|}$ be the list of individuals in $F$ sorted by ascending $f_1$ values and $S_{2.1}, \dots ,S_{2.|F|}$ be the list of individuals sorted by ascending $f_2$ values, which were used to compute the crowding distances. Then there exist $a\leq b$  and $a'\leq b'$ such that $[a..b] = \{i \mid f_1(S_{1.i})=v_1\}$ and $[a'..b'] = \{i \mid f_2(S_{2.i})=v_2\}$. For $i \in [a+1..b-1]$ and $S_{1.i}=S_{2.j}$ for some $j \in [a'+1..b'-1]$, we have that $f_1(S_{1.i-1})=v_1=f_1(S_{1.i+1})$ and $f_2(S_{2.j-1})=v_2=f_2(S_{2.j+1})$. So $\cDis(S_{1.i})= 0$. Hence the only individuals that could have positive crowding distance are $S_{1.a}$, $S_{2.a'}$, $S_{1.b}$, $S_{2.b'}$. Since the relative positions of the individuals whose objective value is $v$ are the same in $S_{1.1},\dots,S_{1.|F|}$ and $S_{2.1}, \dots ,S_{2.|F|}$, we have $S_{1.a}$ is the same individual as $S_{2.a'}$ and $S_{1.b}$ is the same individual as $S_{2.b'}$. So there are at most two individuals whose crowding distances are positive.

Next, we prove that if there are at least two individuals in $R_t$ whose objective values are $v$, then for any generation $s\geq t$, there are exactly two individuals $x, y\in R_s$ such that $f(x)=f(y)=v$, $\cDis(x)>0$ and $\cDis(y)>0$. By the definition of the crowding distance, we have that $\cDis(S_{1.a})\geq \frac{f_1(S_{1.a+1}) -  f_1(S_{1.a-1})}{f_1(S_{1.|F_1|})-f_1(S_{1.1})}\geq\frac{f_1(S_{1.a}) -  f_1(S_{1.a-1})}{f_1(S_{1.|F_1|})-f_1(S_{1.1})}$. Since $f_1(S_{1.a}) -  f_1(S_{1.a-1}) > 0$ by the definition of $a$, we have $\cDis(S_{1.a}) > 0$. Similarly, we have $\cDis(S_{1.b}) > 0$. Since there are at least two individuals $x,y\in R_t$ such that $f(x)=f(y)=v$, we have $a\neq b$, meaning $S_{1.a}$ and $S_{1.b}$ are not the same individual. Since $v\in F^*$, we have that $S_{1.a}$ and $S_{1.b}$ belong to the first rank. Consequently, these two will survive to the next generation and appear in $R_{t+1}$. Then again there will be exactly two individuals in $R_{t+1}$ whose objective values are $v$ and who have positive crowding distances. Inductively, in any generation $s\geq t$, there will always be exactly two such individuals.

By Lemma 3 of \cite{DoerrQ22ppsn}, in at most $e(\frac{4k}{3})^k$ iterations, there will be an individual $x$ in the parent population such that $f(x)\in F_I^*$. Lemma 4 of \cite{DoerrQ22ppsn} then states that once such $x$ appears, in $O(n \log n)$ iterations in expectation, every objective value $v\in F_I^*$ will have been generated at least once. Then, in $O(n\log n)$ iterations in expectation, every objective value in $F_I^*$ will have been generated at least twice. So for any generation $t$ starting from there, there will be exactly two individuals $x, y\in R_t$ such that $f(x)=f(y)=v$, $\cDis(x)>0$ and $\cDis(y)>0$.
\end{proof}

We call the phase where, for every objective value $v\in F_I^*$, there are exactly two individuals $x, y$ in the combined population such that $f(x)=f(y)=v$, $\cDis(x)>0$ and $\cDis(y)>0$, the tightening phase. In the following analyses, we define $s^*=O(n\log n)$ to be the generation where the algorithm first enters the tightening phase. Then by Lemma~\ref{lem:fixed}, for any generation $t\geq s^*$, the algorithm stays in the tightening phase. In the following Lemma, we estimate similarly to Lemma~\ref{lem:zero_survives} the probability that a rank-$1$ individual with zero crowding distance survives. What is different now is that for the tightening phase, we can calculate the probability precisely (apart from lower order terms). 

\begin{lemma}\label{lem:zero_survives_fixed}
Consider a generation $t\geq s^*$ of the NSGA-II algorithm with fixed sorting optimizing the \onejumpzerojump benchmark with population size $N=c(n-2k+3)$ for some $c\geq 2$ such that $ck^2=o(n)$. Suppose $P_0\subseteq S_I^*$, $\mathbb{E}[X_{P_t}^0]=O(ck)$, and $\mathbb{E}[X_{P_t}^{n-2k}]=O(ck)$. For a rank-1 individual $x\in R_t$ that has zero crowding distance, the probability that $x\in P_{t+1}$ is $\frac{c-2}{2c-2}\pm o(1)$.
\end{lemma}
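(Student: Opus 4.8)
The plan is to follow the structure of the proof of Lemma~\ref{lem:zero_survives}, but to replace the crude estimate $|F_1^*|\ge 0$ used there by an \emph{exact} count of the rank-$1$ individuals carrying positive crowding distance, which the tightening phase now makes available. As before, write $F_{>1}$ for the individuals of $R_t$ of rank greater than $1$ and $F_1^*$ for the rank-$1$ individuals with positive crowding distance. Since $P_0\subseteq S_I^*$ and the hypotheses $\mathbb{E}[X_{P_t}^0]=O(ck)$, $\mathbb{E}[X_{P_t}^{n-2k}]=O(ck)$ and $ck^2=o(n)$ are precisely those of Lemma~\ref{lem:zero_survives}, the identical computation yields $\mathbb{E}[|F_{>1}|]=o(1)$, so $\Pr[|F_{>1}|\ge 1]=o(1)$ by Markov's inequality.

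The new ingredient is the exact value of $|F_1^*|$. Because $t\ge s^*$, the algorithm is in the tightening phase, so by Lemma~\ref{lem:fixed} every inner objective value $v\in F_I^*$ is represented by exactly two individuals of positive crowding distance. As $|F_I^*|=n-2k+1$, this accounts for $2(n-2k+1)$ such individuals. The only further rank-$1$ individuals that can carry positive crowding distance sit at the two extreme objective values $(k,n+k)$ and $(n+k,k)$, which are attained solely by $0^n$ and $1^n$; each is a single individual, contributing a further $O(1)$. Hence $|F_1^*|=2(n-2k+1)+O(1)=2(n-2k+3)+O(1)$.

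I would then combine the two facts exactly as in Lemma~\ref{lem:zero_survives}. Conditioned on $|F_{>1}|=0$, there are $2N-|F_1^*|$ rank-$1$ individuals of zero crowding distance; since positive crowding distance is preferred to zero and $|F_1^*|\le N$ (valid for $c>2$ and large instances), all of $F_1^*$ survives, so exactly $N-|F_1^*|$ of the zero-crowding-distance individuals survive, chosen uniformly. Thus a fixed such $x$ survives with probability $\frac{N-|F_1^*|}{2N-|F_1^*|}$ on this event. Using $\Pr[|F_{>1}|=0]=1-o(1)$ together with the trivial bounds on the complementary event turns this into the two-sided estimate $\Pr[x\in P_{t+1}]=\frac{N-|F_1^*|}{2N-|F_1^*|}\pm o(1)$. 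Substituting $N=c(n-2k+3)$ and writing $m=n-2k+3\to\infty$, the ratio becomes $\frac{(c-2)m+O(1)}{(2c-2)m+O(1)}=\frac{c-2}{2c-2}+o(1)$, as claimed.

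I expect the main obstacle to be making the estimate genuinely two-sided, rather than the one-sided bound of Lemma~\ref{lem:zero_survives}. Two points need care. First, the $O(1)$ contributions from the extreme objective values and from the event $\{|F_{>1}|\ge 1\}$ must be shown negligible \emph{uniformly in} $c$; this holds because the denominator satisfies $(2c-2)m=\Omega(m)\to\infty$, so the discrepancy between the two ratios is $O\!\left(\frac{1}{cm}\right)=o(1)$. Second, the boundary case $c=2$, where $|F_1^*|$ may marginally exceed $N$ and not all of $F_1^*$ can survive, should be handled separately: there at most $O(1)$ zero-crowding-distance individuals survive out of $\Theta(m)$ candidates, giving survival probability $o(1)$, which matches $\frac{c-2}{2c-2}=0$. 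This degenerate regime is treated briefly and does not affect the general argument.
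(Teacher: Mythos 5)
Your proposal is correct and follows essentially the same route as the paper's proof: invoke Lemma~\ref{lem:fixed} to pin down $|F_1^*|=2(n-2k+3)-\Theta(1)$ during the tightening phase, reuse the $\Pr[|F_{>1}|\geq 1]=o(1)$ bound from Lemma~\ref{lem:zero_survives}, and evaluate $\frac{N-|F_1^*|}{2N-|F_1^*|}$ to get the two-sided estimate. Your extra remarks on uniformity in $c$ and the boundary case $c=2$ are sound but not needed beyond what the paper's $\Theta(1)$ bookkeeping already delivers.
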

\begin{proof}
Since now, for every objective value $v\in F_I^*$, there are exactly two individuals $x, y$ such that $f(x)=f(y)=v$, $\cDis(x) >0$ and $\cDis(y)>0$, using the same notations as in Lemma~\ref{lem:zero_survives}, we have $|F_1^*|=2(n-2k+3)-\Theta(1)$. So the probability that a rank-$1$ individual $x$ with zero crowding distance survives is \[\begin{aligned}\frac{N-|F_{1}^*|}{2N-|F_{1}^*|}&\Pr[|F_{>1}|=0]+\\&\Pr[x \text{ survives}||F_{>1}|\geq 1]]\Pr[|F_{>1}|\geq 1].\end{aligned}\] As proved in Lemma~\ref{lem:zero_survives}, $\Pr[|F_{>1}|=0]\geq 1-o(1)$. So $\Pr[\text{x survives}]=\frac{(c-2)(n-2k+3)+\Theta(1)}{(2c-2)(n-2k+3)+\Theta(1)}(1-o(1))+o(1)=\frac{c-2}{2c-2}\pm o(1)$.
\end{proof}

\begin{corollary}\label{cor:develop_fixed}
Consider a generation $t\geq s^*$ of the NSGA-II algorithm with fixed sorting optimizing the \onejumpzerojump benchmark with population size $N=c(n-2k+3)$ for some $c\geq 2$ such that $ck^2=o(n)$. Suppose $P_0\subseteq S_I^*$, $\mathbb{E}[X_{P_t}^0]=O(ck)$, and $\mathbb{E}[X_{P_t}^{n-2k}]=O(ck)$. Then for any $i\in[0..n-2k]$, we have $\mathbb{E}[X^i_{P_{t+1}}] = (\frac{c-2}{2c-2}+o(1))\mathbb{E}[X^i_{R_t}]+\frac{c}{c-1} \pm o(1)$.  
\end{corollary}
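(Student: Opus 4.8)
The plan is to mirror the decomposition used in the proof of Corollary~\ref{cor:develop}, but to replace the one-sided estimates there by the sharp two-sided estimates now available in the tightening phase, so that the final relation is an equality up to $o(1)$ rather than an inequality. I would fix $i\in[0..n-2k]$ and note that this value of $i$ singles out one inner objective value, since every individual with $n-k-i$ one-bits maps to the same point of $F_I^*$. Among the individuals of $R_t$ with $n-k-i$ one-bits I would write $X_{>0}$ for the number with positive crowding distance, $X_{=0}$ for the number with zero crowding distance, and $X^*_{=0}$ for the number of the latter that survive, so that $X^i_{R_t}=X_{>0}+X_{=0}$. Because every rank-$1$ individual of positive crowding distance survives (Lemma~1 of~\cite{DoerrQ22ppsn}, whose hypothesis $|F_1^*|\le N$ holds here since $|F_1^*|=2(n-2k+3)-\Theta(1)\le N$ for $c\ge 2$) and all these individuals are rank~$1$ (they lie on $F_I^*\subseteq F^*$), the individuals of $P_{t+1}$ with $n-k-i$ one-bits are \emph{precisely} the $X_{>0}$ guaranteed survivors together with the surviving zero-distance ones, giving the exact identity $X^i_{P_{t+1}}=X_{>0}+X^*_{=0}$.

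The two facts that turn this into the claimed equality are an exact value for $\mathbb{E}[X_{>0}]$ and a precise survival probability. Since $t\ge s^*$, Lemma~\ref{lem:fixed} places us in the tightening phase, where for the objective value attached to $i$ there are exactly two rank-$1$ individuals of positive crowding distance in $R_t$; hence $X_{>0}=2$ deterministically, so $\mathbb{E}[X_{>0}]=2$ and $\mathbb{E}[X_{=0}]=\mathbb{E}[X^i_{R_t}]-2$. For the zero-distance individuals I would apply Lemma~\ref{lem:zero_survives_fixed}, which gives each of them survival probability $\frac{c-2}{2c-2}\pm o(1)$; linearity of expectation (conditioning on which individuals have zero crowding distance, exactly as in Corollary~\ref{cor:develop}) then yields $\mathbb{E}[X^*_{=0}]=\left(\frac{c-2}{2c-2}\pm o(1)\right)\mathbb{E}[X_{=0}]$.

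It then remains to combine these and account for the error terms. Substitution gives
\[\mathbb{E}[X^i_{P_{t+1}}]=2+\left(\tfrac{c-2}{2c-2}\pm o(1)\right)\bigl(\mathbb{E}[X^i_{R_t}]-2\bigr),\]
and on expanding, the $\pm o(1)$ that multiplies $\mathbb{E}[X^i_{R_t}]$ is absorbed into the leading coefficient $\frac{c-2}{2c-2}+o(1)$ (just as the coefficient $\frac12+o(1)$ appears in Corollary~\ref{cor:develop}), while the constant term simplifies to $2-2\cdot\frac{c-2}{2c-2}=2-\frac{c-2}{c-1}=\frac{c}{c-1}$, leaving a residual additive $\pm o(1)$ from the term $-2\cdot o(1)$. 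This is exactly the asserted expression.

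I do not anticipate a genuine obstacle, as the substance is carried by Lemmas~\ref{lem:fixed} and~\ref{lem:zero_survives_fixed}; the only care needed is to retain two-sided control of every estimate so that the concluding equality is justified, and to carry out the simplification $2-\frac{c-2}{c-1}=\frac{c}{c-1}$ correctly. The single point I would verify explicitly is that the count $X_{>0}=2$ carries no exceptional event into the expectation — which is precisely the deterministic ``exactly two'' guarantee of the tightening phase for $t\ge s^*$, and is what makes the constant term exact rather than merely an upper bound.
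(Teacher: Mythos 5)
Your proposal is correct and follows essentially the same route as the paper: the identical decomposition $X^i_{P_{t+1}}=X_{>0}+X^*_{=0}$, the guaranteed survival of positive-crowding-distance individuals, Lemma~\ref{lem:zero_survives_fixed} for the zero-distance survival probability, and $\mathbb{E}[X_{>0}]=2$ from Lemma~\ref{lem:fixed}, differing only in the (immaterial) order in which the substitutions are performed. The extra care you take in verifying $|F_1^*|\le N$ and in confirming that all individuals with $n-k-i$ one-bits are rank~$1$ is consistent with, and slightly more explicit than, the paper's own argument.
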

\begin{proof}
Since $N=c(n-2k+3)$ for $c\geq 2$, any individual with $n-k-i$ 1-bits and positive crowding distance survives to the next generation. Then, using the same notations as in Corollary~\ref{cor:develop}, we have $\mathbb{E}[X^i_{P_{t+1}}] = \mathbb{E}[X_{>0}] + \mathbb{E}[X_{=0}^*]$. Then by Lemma~\ref{lem:zero_survives_fixed}, \[\begin{aligned}\mathbb{E}[X^i_{P_{t+1}}] &= \mathbb{E}[X_{>0}] + (\frac{c-2}{2c-2}\pm o(1))\mathbb{E}[X_{=0}] \\&=(\frac{c-2}{2c-2}\pm o(1))\mathbb{E}[X^i_{R_t}]\\&+(\frac{c}{2c-2}\pm o(1))\mathbb{E}[X_{>0}].\end{aligned}\] Since $\mathbb{E}[X_{>0}]=2$ by Lemma~\ref{lem:fixed}, we have $\mathbb{E}[X^i_{P_{t+1}}] = (\frac{c-2}{2c-2}\pm o(1))\mathbb{E}[X^i_{R_t}]+\frac{c}{c-1}\pm o(1)$.
\end{proof}

Consequently, we can calculate $\mathbb{E}[X_{P_{t}}^0]$ and $\mathbb{E}[X_{P_{t}}^{n-2k}]$ precisely apart from lower order terms.
\begin{lemma}\label{lem:bound_fixed}
Consider the NSGA-II algorithm with fixed sorting optimizing the \onejumpzerojump benchmark with the population size $N=c(n-2k+3)$ for some $c\geq 2$ such that $ck^2=o(n)$. Suppose $P_0 \subseteq S_{I}^*$. We have for any $t\geq s^*+\log n$, if $1^n\notin P_t$, then $\mathbb{E}[X^0_{P_t}]=\frac{2ec}{ec-c+2}\pm o(1)$. Similarly, if $0^n\notin P_t$, then $\mathbb{E}[X_{P_{t}}^{n-2k}]=\frac{2ec}{ec-c+2}\pm o(1)$.
\end{lemma}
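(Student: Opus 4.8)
The plan is to turn Corollary~\ref{cor:develop_fixed} into a self-contained affine recurrence for the single quantity $a_t := \mathbb{E}[X^0_{P_t}]$ and then read off its fixed point. Restricting to the case $1^n\notin P_t$ (the case $0^n\notin P_t$ being symmetric), Lemma~\ref{lem:all_in} guarantees that no individual carries more than $n-k$ $1$-bits, so an offspring can attain exactly $n-k$ $1$-bits only from a parent with at most $n-k$ $1$-bits, and the recurrence for $X^0$ is closed from above. Writing, as in the proof of Lemma~\ref{lem:bound}, $\mathbb{E}[X^0_{R_t}]=\mathbb{E}[X^0_{P_t}]+\mathbb{E}[Y^0]+\mathbb{E}[Z^0]$, I would use $\mathbb{E}[Y^0]=(1-\tfrac1n)^n\mathbb{E}[X^0_{P_t}]=(\tfrac1e\pm o(1))a_t$ together with $\mathbb{E}[Z^0]=o(1)$, the latter established exactly as in Corollary~\ref{cor:constant}. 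This reuse is legitimate because the upper bounds $c_0,c_1=O(ck)$ of Lemma~\ref{lem:bound} only rely on the survival probability being at most $\tfrac12+o(1)$, which by Lemma~\ref{lem:zero_survives_fixed} still holds for fixed sorting (as $\tfrac{c-2}{2c-2}\le\tfrac12$); hence $\mathbb{E}[X^0_{P_t}],\mathbb{E}[X^1_{P_t}]=O(ck)$ and $\mathbb{E}[Z^0]\le\tfrac{k+1}{n}(c_0+c_1)+(\tfrac{k+2}{n})^2 N=o(1)$, using $ck^2=o(n)$. Together this gives $\mathbb{E}[X^0_{R_t}]=(1+\tfrac1e)a_t\pm o(1)$.

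Substituting into Corollary~\ref{cor:develop_fixed} yields
\begin{equation*}
a_{t+1}=\left(\frac{c-2}{2c-2}\cdot\frac{e+1}{e}+o(1)\right)a_t+\frac{c}{c-1}\pm o(1),
\end{equation*}
that is, $a_{t+1}=\alpha a_t+\beta\pm o(1)$ with $\alpha=\tfrac{c-2}{2c-2}\cdot\tfrac{e+1}{e}$ and $\beta=\tfrac{c}{c-1}$. A direct computation gives $1-\alpha=\tfrac{ec-c+2}{(2c-2)e}$, so the fixed point is $a^*=\tfrac{\beta}{1-\alpha}=\tfrac{2ec}{ec-c+2}$, exactly the claimed value. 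Moreover $\alpha<\tfrac12\cdot\tfrac{e+1}{e}<1$ for every $c\ge 2$, so the recurrence is contracting, and $a^*$ lies in $[2,\tfrac{2e}{e-1})$, hence $a^*=O(1)$.

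It then remains to prove convergence. Setting $b_t:=a_t-a^*$, the recurrence becomes $b_{t+1}=(\alpha+o(1))b_t\pm o(1)$: the term $(1-\alpha)a^*$ cancels $\beta$, and since $a^*=O(1)$ the multiplicative $o(1)$ acting on $a^*$ is absorbed into an additive $o(1)$. As the contraction factor is bounded away from $1$, iterating from generation $s^*$, where $|b_{s^*}|=O(ck)$ by the $O(ck)$ bound above, gives $|b_{s^*+\log n}|\le(\alpha+o(1))^{\log n}\,O(ck)+o(1)$; since $ck=o(n)$ and the base is a constant below $1$, the geometric factor damps the first summand to $o(1)$ within the allotted $\Theta(\log n)$ iterations, and once $|b_t|=o(1)$ the recurrence keeps it so for all later $t$. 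This yields $\mathbb{E}[X^0_{P_t}]=\tfrac{2ec}{ec-c+2}\pm o(1)$, and the symmetric argument for parents with $n-k$ $0$-bits (invoking $0^n\notin P_t$) gives the bound for $\mathbb{E}[X^{n-2k}_{P_t}]$.

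I expect the main obstacle to be the bookkeeping of the error terms during the transient phase rather than the fixed-point computation. The multiplicative $o(1)$ in the recurrence is applied to $a_t$, which may be as large as $\Theta(ck)$ before the process settles; one must check that these errors, summed over the iterations, still form a convergent geometric series contributing only $o(1)$, and that the geometric damping genuinely drives the $O(ck)=o(n)$ initial deviation below $o(1)$ in $\Theta(\log n)$ steps. Both points rely precisely on $a^*=O(1)$, on the contraction factor being bounded away from $1$, and on the hypothesis $ck^2=o(n)$.
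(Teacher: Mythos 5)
Your overall route is the same as the paper's: decompose $\mathbb{E}[X^0_{R_t}]=\mathbb{E}[X^0_{P_t}]+\mathbb{E}[Y^0]+\mathbb{E}[Z^0]$, show $\mathbb{E}[Z^0]=o(1)$ by reusing the Section~\ref{sec:low_bound_ojzj} bounds, feed this into Corollary~\ref{cor:develop_fixed} to obtain the affine recurrence with coefficient $\alpha=\frac{c-2}{2c-2}\cdot\frac{e+1}{e}$ and offset $\frac{c}{c-1}$, and identify the fixed point $\frac{2ec}{ec-c+2}$ via $1-\alpha=\frac{ec-c+2}{2e(c-1)}$. That algebra is correct, and your justification for why the $O(ck)$ bounds carry over to $c\ge 2$ under fixed sorting is, if anything, more explicit than the paper's.

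There is, however, one step that fails as written: the convergence argument. You start the contraction from $|b_{s^*}|=O(ck)$ and claim that $(\alpha+o(1))^{\log n}\,O(ck)=o(1)$ because ``the base is a constant below $1$.'' But $\alpha$ tends to $\frac{e+1}{2e}\approx 0.684$ as $c$ grows, so $\alpha^{\ln n}=n^{\ln\alpha}\approx n^{-0.38}$ (and $\alpha^{\log_2 n}\approx n^{-0.55}$); since the hypothesis $ck^2=o(n)$ permits, say, $k=2$ and $c=n^{0.9}$, the product $\alpha^{\log n}\cdot O(ck)$ can be as large as $n^{0.9-0.38}\rightarrow\infty$. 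A constant contraction factor applied $\log n$ times only gains a fixed polynomial factor $n^{-\Theta(1)}$, which does not beat an initial deviation that is itself polynomially large. The repair is already in your hands: Corollary~\ref{cor:constant} (which you invoke for $\mathbb{E}[Z^0]=o(1)$, and which transfers to the fixed-sorting setting for the same reasons as the other Section~\ref{sec:low_bound_ojzj} bounds) gives the uniform bound $\mathbb{E}[X^0_{P_t}]\le\frac{4e}{e-1}+o(1)=O(1)$ for every generation before $1^n$ is found, so $|b_{s^*}|=O(1)$ and $\log n$ contraction steps then do drive the deviation to $n^{-\Theta(1)}=o(1)$; this also disposes of your worry about the multiplicative $o(1)$ acting on $a_t$, since $a_t$ is $O(1)$ throughout rather than merely after the transient. (The paper's own write-up is terse on exactly this point --- it asserts convergence within ``$\omega(1)$ iterations'' and then refers to $t\ge n^2$, inconsistently with the stated $t\ge s^*+\log n$ --- but it implicitly relies on the same $O(1)$ starting bound.)
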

\begin{proof}
Since the NSGA-II with fixed sorting is a special case of the general algorithm considered in section~\ref{sec:low_bound_ojzj}, the bounds proven there still apply. So using the same notations as in Lemma~\ref{lem:bound}, the proof of Corollary~\ref{cor:constant} shows that $\mathbb{E}[Z^0]\le o(1)$. Also, since there are at least two individuals with $n-k-1$ 1-bits, $\mathbb{E}[Z^0]\ge 2\frac{k+1}{n}=o(1)$. So $\mathbb{E}[Z^0] = o(1)$. Moreover, $\mathbb{E}[Y^0]=(1-\frac{1}{n})^n\mathbb{E}[X^0_{P_t}]=(\frac{1}{e}-o(1))\mathbb{E}[X^0_{P_t}]$. So for any generation $t'\ge s^*$, we have \[\begin{aligned}\mathbb{E}[X^0_{P_{t'+1}}]&=(\frac{c-2}{2c-2}\pm o(1))((\frac{e+1}{e}-o(1))\mathbb{E}[X^0_{P_t'}]+o(1))\\&+\frac{c}{c-1}\pm o(1).\end{aligned}\] Hence the sequence of $\mathbb{E}[X^0_{P_t'}]$ converges to $C=\frac{2ec}{ec-c+2}\pm o(1)$. Since $\frac{|C-\mathbb{E}[X^0_{P_{t'+1}}]|}{|C-\mathbb{E}[X^0_{P_t'}]|}=\frac{(e+1)(c-2)}{2e(c-1)}\pm o(1)< 1$, we have that in $\omega(1)$ iterations, $\mathbb{E}[X^0_{P_t'}]$ reaches $\frac{2ec}{ec-c+2}\pm o(1)$. Since the sequence starts from iteration $s^*$, for any $t\geq n^2$, we have that $\mathbb{E}[X^0_{P_t}]=\frac{2ec}{ec-c+2}\pm o(1)$.
\end{proof}

\begin{theorem}\label{thm:fixed}
Consider the NSGA-II algorithm with fixed sorting optimizing the \onejumpzerojump benchmark, for $k\geq 3$, with the population size $N=c(n-2k+3)$, for some $c\geq 2$ such that $ck^2=o(n)$. Then the number of fitness evaluations needed in expectation is  $\frac{3}{2}N(\frac{2c}{ec-c+2}\pm o(1))^{-1}n^k$.
\end{theorem}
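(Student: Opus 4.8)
The plan is to mirror the structure of the proof of Theorem~\ref{thm:lower}, but now to exploit that Lemma~\ref{lem:bound_fixed} gives the expected number of ``corner-feeding'' individuals \emph{precisely} (up to $\pm o(1)$), so that a matching upper and lower bound on the expected runtime are obtained simultaneously. As in Theorem~\ref{thm:lower}, I would condition on the event $P_0\subseteq S_I^*$, which by Lemma~\ref{lem:all_in} has probability $1-o(1)$ and only costs a factor $1-o(1)$, and reduce the problem to estimating the expected number of iterations until both $1^n$ and $0^n$ have been generated (once generated, each survives, since it is Pareto-optimal and $c\ge 2$).

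First I would pin down the per-iteration probability $\bar q$ of generating $1^n$, conditioned on $1^n\notin P_t$ and on $t$ being past a burn-in phase. Writing the expected number of children equal to $1^n$ as $\sum_{i\ge 0}\mathbb{E}[X^i_{P_t}]\,(\tfrac1n)^{k+i}(1-\tfrac1n)^{n-k-i}$, the $i=0$ term is governed by Lemma~\ref{lem:bound_fixed}, giving $\mathbb{E}[X^0_{P_t}]\,(\tfrac1n)^k(1-\tfrac1n)^{n-k}=(\tfrac{2ec}{ec-c+2}\pm o(1))\tfrac1e(\tfrac1n)^k=(\tfrac{2c}{ec-c+2}\pm o(1))(\tfrac1n)^k$, using $(1-\tfrac1n)^{n-k}=\tfrac1e+o(1)$. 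The terms $i\ge 1$ are lower order: by Lemma~\ref{lem:bound} the $i=1$ contribution is $O(ck)(\tfrac1n)^{k+1}=o((\tfrac1n)^k)$ since $ck=o(n)$, and the tail $i\ge 2$ is at most $N\sum_{i\ge 2}(\tfrac1n)^{k+i}=O(c/n)(\tfrac1n)^k=o((\tfrac1n)^k)$ since $c=o(n)$. Hence $\bar q=(\tfrac{2c}{ec-c+2}\pm o(1))(\tfrac1n)^k$, and the same holds for $0^n$ via $X^{n-2k}_{P_t}$ by symmetry.

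Next I would assemble the two-phase argument that produces the factor $\tfrac32$, exactly as in Theorem~\ref{thm:lower}. While neither corner has been found, both $\mathbb{E}[X^0_{P_t}]$ and $\mathbb{E}[X^{n-2k}_{P_t}]$ equal $\tfrac{2ec}{ec-c+2}\pm o(1)$, so the combined rate of producing \emph{either} corner is $2\bar q\pm o(\bar q)$ and the first corner is found in expected time $\tfrac1{2\bar q}(1\pm o(1))$. Crucially, once (say) $1^n$ has been found, the estimate of Lemma~\ref{lem:bound_fixed} for $X^{n-2k}$ is conditioned \emph{only} on $0^n\notin P_t$ and is therefore unaffected by the presence of $1^n$ (producing an $X^{n-2k}$-individual out of $1^n$ would require flipping $n-k$ bits and is negligible), so the second corner is found in additional expected time $\tfrac1{\bar q}(1\pm o(1))$. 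Summing gives $\tfrac1{2\bar q}+\tfrac1{\bar q}=\tfrac{3}{2\bar q}=\tfrac32(\tfrac{2c}{ec-c+2}\pm o(1))^{-1}n^k$ iterations, i.e.\ $\tfrac32 N(\tfrac{2c}{ec-c+2}\pm o(1))^{-1}n^k$ fitness evaluations. The restriction $k\ge 3$ enters precisely here: the burn-in $T_0=O(n^2)$ until Lemma~\ref{lem:bound_fixed} applies is $o(n^k)$, and by Corollary~\ref{cor:constant} the per-iteration generation probability during burn-in is $O((\tfrac1n)^k)$, so the probability of finding a corner before the estimates stabilize is $O(n^{2-k})=o(1)$ and the burn-in is asymptotically negligible for the leading constant.

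The main obstacle is the upper-bound direction, since Lemma~\ref{lem:bound_fixed} controls only the \emph{expectation} of $X^0_{P_t}$, whereas the per-iteration success probability is itself a random quantity; the naive deterministic lower bound $X^0_{P_t}\ge 2$ available in the tightening phase would yield the wrong constant $\tfrac{e}{2}$. I would resolve this with a Wald-type identity: letting $M_t$ be the number of copies of $1^n$ produced in iteration $t$ if $1^n$ has not yet been found (and $0$ otherwise) and $\tau$ the hitting time, one has $\{\tau\ge t\}=\{1^n\notin P_t\}$, whence $\mathbb{E}[M_t]=\mathbb{E}[X^0_{P_t}\mid 1^n\notin P_t]\,(\tfrac1n)^k(1-\tfrac1n)^{n-k}\Pr[\tau\ge t]+(\text{l.o.t.})=\bar q\,(1\pm o(1))\Pr[\tau\ge t]$ by Lemma~\ref{lem:bound_fixed}. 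Summing over $t$, using $\sum_t M_t=M_\tau$ with $\mathbb{E}[M_\tau]=1\pm o(1)$ (at most one copy is produced at the hitting time up to lower-order terms) and $\sum_{t\ge T_0}\Pr[\tau\ge t]=\mathbb{E}[\tau]-O(T_0)=\mathbb{E}[\tau](1-o(1))$, yields $\mathbb{E}[\tau]=\bar q^{-1}(1\pm o(1))$, the precise two-sided estimate needed. Applying the same identity with ``$1^n$ or $0^n$'' as target in the first phase and the single remaining corner in the second phase delivers the two phase lengths above and completes the proof.
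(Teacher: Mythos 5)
Your proposal follows essentially the same route as the paper's proof: condition on $P_0\subseteq S_I^*$, use Lemma~\ref{lem:bound_fixed} (together with Lemma~\ref{lem:bound} for the $n-k-1$ level and a trivial bound for the remaining levels) to pin the per-iteration corner-generation probability to $(\tfrac{2c}{ec-c+2}\pm o(1))n^{-k}$ after a burn-in phase that is negligible for $k\ge 3$, and then apply the max-of-two-waiting-times argument to obtain the factor $\tfrac32$. Your Wald-type identity makes rigorous the conversion from the expectation bound on $X^0_{P_t}$ (conditioned on $1^n\notin P_t$) into a two-sided bound on $\mathbb{E}[\tau]$ -- a step the paper performs only informally -- and your burn-in estimate via the $O(1)$ bound on $\mathbb{E}[X^0_{P_t}]$ over $O(n^2)$ iterations is, if anything, slightly sharper than the paper's crude $2N/n^k$ per-iteration bound.
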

\begin{proof}
For the upper bound, if the all-ones string is not found in $O(n\log n)$ iterations, then by Lemma~\ref{lem:bound_fixed}, we have $\mathbb{E}[X^0_{P_{t}}]=\frac{2ec}{ec-c+2}\pm o(1)$ for any iteration $t$ afterwards. Then the probability of generating the all-ones string in an iteration is at least $(\frac{2ec}{ec-c+2}- o(1))(\frac{1}{n})^k(1-\frac{1}{n})^{n-k}$ and in expectation at most $(\frac{2ec}{ec-c+2}(1-\frac{1}{n})^{n-k}- o(1))^{-1}n^k$ iterations are needed to find the all-ones string. Then at most $\frac{3}{2}(\frac{2ec}{ec-c+2}(1-\frac{1}{n})^{n-k}-o(1))^{-1}n^k\le \frac{3}{2}(\frac{2c}{e
c-c+2}-o(1))^{-1}n^k$ iterations are needed to find both the all-ones and the all-zeroes string.

For the lower bound, suppose $P_0\subseteq S_I^*$. Then by Lemma~\ref{lem:all_in}, for any generation $t$ there is no individual with less than $k$ 0-bits or 1-bits in the parent population. Then for an individual $x\in P_t$ to generate the all-ones or the all-zeroes string, the probability is at most $(\frac{1}{n})^{|x|_1}(1-\frac{1}{n})^{|x|_0}+(\frac{1}{n})^{|x|_0}(1-\frac{1}{n})^{|x|_1}\leq \frac{2}{n^k}$. Then for any iteration to generate the all-ones or the all-zeroes string, the probability is at most $\frac{2N}{n^k}$. Therefore, the probability that neither the all-ones nor the all-zeroes string has been generated in $O(n\log n)$ iterations is $(1-\frac{2N}{n^k})^{O(n\log n)}\geq 1-\frac{O(cn^2\log n)}{n^k}=1-o(1)$ since $k\geq 3$ and $c=o(n)$. Then for any iteration $t$ afterwards, we have that, $\mathbb{E}[X^0_{P_{t}}]=\frac{2ec}{ec-c+2}\pm o(1)$. Moreover, by Lemma~\ref{lem:bound}, we have that $\mathbb{E}[X^1_{P_{t}}]=O(ck)$ for any generation $t$. Then the probability that the all-ones string is generated in an iteration is at most $(\frac{2ec}{ec-c+2}+ o(1))(\frac{1}{n})^k(1-\frac{1}{n})^{n-k}+O(ck)(\frac{1}{n})^{k+1}+c(n-2k+3)(\frac{1}{n})^{k+2}\leq(\frac{2ec}{ec-c+2}(1-\frac{1}{n})^{n-k}+o(1))(\frac{1}{n})^k$. So the waiting time to find the all-ones string is at least $(\frac{2ec}{ec-c+2}(1-\frac{1}{n})^{n-k}+o(1))^{-1}n^k$. Then the waiting time to find both the all-ones string and the all-zeroes strings is at least $\frac{3}{2}(\frac{2ec}{ec-c+2}(1-\frac{1}{n})^{n-k}+o(1))^{-1}n^k$. Since the probability that i) neither the all-ones string nor the all-zeroes string is found in $n^2$ generations, given that ii) $P_0\subseteq S^*_I$, and the probability that event ii) happens are both $1-o(1)$. The probability that both happen is $(1-o(1))^2$. Then the waiting time to find the two extremal points in any case is at least $(1-o(1))^2\frac{3}{2}(\frac{2ec}{ec-c+2}(1-\frac{1}{n})^{n-k}+o(1))^{-1}n^k=\frac{3}{2}(\frac{2ec}{ec-c+2}(1-\frac{1}{n})^{n-k}+o(1))^{-1}n^k=\frac{3}{2}(\frac{2c}{ec-c+2}+o(1))^{-1}n^k$ for $k=o(n)$.
\end{proof}

\section{Lower Bound on the Runtime of the NSGA-II on \oneminmax}
\cite{ZhengLD22} gave an $O(Nn\log n)$ upper bound on the runtime of the NSGA-II optimizing the \oneminmax benchmark. In this section, we prove a matching lower bound using the techniques we have developed so far.

In this section, for any $i\in[0..n]$ and any generation $t$, we let $X^i_{P_t}$ denote the number of individuals with $i$ 0-bits in $P_t$ and let $X^i_{R_t}$ denote that in $R_t$.

Suppose in an iteration $t$ the individual with the least number of $0$-bits in $P_t$ has $i_t$ $0$-bits. Then we can think of the algorithm making progress as it tries to decrease $i_t$ till it becomes $0$, at which point the algorithm has found $1^n$. In the following lemma, we give upper bounds on $\mathbb{E}[X^{i_t}_{P_t}]$ when $i_t$ is close to $0$, which will help us estimate the waiting time needed for the algorithm to make progress there.

\begin{lemma}\label{lem:oneminmax_bound}
Consider the NSGA-II algorithm optimizing the \oneminmax benchmark for $n > 16$, with $N = c(n+1)$ for $c\geq 4$, where for all $x\in P_0$, $|x|_0\geq \frac{n}{4}$. Suppose $v\in[0..n]$ such that $cv^2=o(n)$ and for a generation $t$ there is no individual $x\in P_t$ such that $|x|_0 < v$.  Then $\mathbb{E}[X^v_{P_t}]\leq c_0 = \frac{4e}{e-1}+o(1)$ and $\mathbb{E}[X^{v+1}_{P_{t}}]\leq \frac{e-1}{e}(c(v+2)+c_0 + 4)+o(c)$.
\end{lemma}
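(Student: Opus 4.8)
The plan is to reuse, in symmetric form, the development machinery built for \ojzj in Section~\ref{sec:low_bound_ojzj}, exploiting the fact that on \oneminmax every search point is Pareto-optimal, so the whole combined population $R_t$ lies in a single rank and $F_{>1}=\emptyset$ at all times. I would read the index $v$ as the current ``frontier'' (the smallest number of $0$-bits present), so that position $v$ plays the role of position $0$ in \ojzj (the analogue of the $n-k$ boundary) and position $v+1$ the role of position $1$; reaching $1^n$ from the frontier then amounts to flipping $v$ bits, exactly as reaching $1^n$ from the inner front in \ojzj amounts to flipping $k$ bits. The two claimed bounds are the \oneminmax counterparts of Corollary~\ref{cor:constant} (the sharp $\frac{4e}{e-1}$ bound at the frontier) and of the $c_1$-estimate in Lemma~\ref{lem:bound} (the $O(cv)$ bound one step behind), and I would derive them by re-running the same fixed-point recursion.

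Concretely, I would first set up the three building blocks in \oneminmax form. (i) \emph{Survival}: since $F_{>1}=\emptyset$, a rank-$1$ individual of zero crowding distance survives with probability exactly $\frac{N-|F_1^*|}{2N-|F_1^*|}\le\frac12$, where $F_1^*$ collects the positive-crowding-distance individuals; the fact, shown by the same crowding-distance argument as Lemma~1 of~\cite{DoerrQ22ppsn}, that at most four of them sit at each objective value, together with $N=c(n+1)\ge 4(n+1)$, gives $|F_1^*|\le N$, so they all survive and the ratio is legitimate. This is even cleaner than Lemma~\ref{lem:zero_survives} since no $o(1)$ from $|F_{>1}|\ge1$ arises. (ii) \emph{Development}: splitting the $i$-class of $R_t$ into the at-most-four surviving positive-crowding individuals and the zero-crowding ones yields $\mathbb{E}[X^i_{P_{t+1}}]\le\frac12\mathbb{E}[X^i_{R_t}]+2$, as in Corollary~\ref{cor:develop}. (iii) \emph{Transitions}: applying Lemma~\ref{lem:going_up} and Corollary~\ref{cor:flip_positive} with $0$- and $1$-bits exchanged (take $w=n-v$ as the $1$-bit count at the frontier), a parent produces a child with exactly $v$ $0$-bits through a positive number of flips with probability at most $\frac{v+1}{n}$, and one with exactly $v+1$ $0$-bits with probability at most $\frac{v+2}{n}$. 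Decomposing $\mathbb{E}[X^v_{R_t}]=\mathbb{E}[X^v_{P_t}]+\mathbb{E}[Y^v]+\mathbb{E}[Z^v]$ (parents, unmutated copies of $v$-parents, and children mutated into the $v$-class), with $\mathbb{E}[Y^v]\le\frac1e\mathbb{E}[X^v_{P_t}]$ and, as in Corollary~\ref{cor:constant}, $\mathbb{E}[Z^v]=o(1)$ (the inflow from positions $v,v+1$ is $O(\frac{v+1}{n}\cdot cv)=O(cv^2/n)=o(1)$, and that from farther positions, needing a double flip, is $O(N(\frac{v+2}{n})^2)=o(1)$, both by $cv^2=o(n)$), the development inequality produces the contraction whose fixed point is $\frac{4e}{e-1}+o(1)$. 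For position $v+1$ the inflow is only bounded by $c(v+2)+c_0$ (parents at positions $\ge v+1$ via the $\frac{v+2}{n}$ estimate, plus a trivial bound of $1$ for each of the at most $c_0$ frontier parents), which via the same recursion gives the second stated estimate with $c_0=\frac{4e}{e-1}$.

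The step I expect to be the genuine obstacle is that, unlike in \ojzj where ``position $0$'' is pinned to the fixed structural boundary $n-k$, here the frontier $v$ is a moving target: the recursion in (iii) must be iterated over the generations $s\le t$, yet the classes involved shift as the minimum number of $0$-bits decreases. To run a clean time-induction I would first prove that the minimum number of $0$-bits is non-increasing, which holds because the individual with the fewest $0$-bits is extremal in $f_2$ and hence has infinite, in particular positive, crowding distance and always survives. This monotonicity makes the set of generations satisfying the hypothesis ``no individual has fewer than $v$ $0$-bits'' an initial segment, so the induction domain is well behaved. The hypothesis $|x|_0\ge\frac n4$ for all $x\in P_0$, together with $cv^2=o(n)$ forcing $v=o(\sqrt n)<\frac n4$, supplies the base case $X^v_{P_0}=0$; while the frontier is above $v$ the class is empty and the bound is trivial, and once it reaches $v$ the contraction with $o(1)$ forcing keeps $\mathbb{E}[X^v_{P_s}]$ below the fixed point thereafter. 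The delicate bookkeeping is to verify that these per-generation estimates compose through the moment the frontier drops onto $v$ --- in particular that the one-step inflow into a freshly exposed frontier class is still $o(1)$ --- so that the sharp $\frac{4e}{e-1}$ bound, and not merely the crude $O(cv)$ bound, survives the transition.
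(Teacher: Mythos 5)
Your proposal is correct and follows essentially the same route as the paper: the exact $\le \frac12$ survival probability for zero-crowding-distance individuals coming from $F_{>1}=\emptyset$, the development inequality $\mathbb{E}[X^i_{P_{t+1}}]\le\frac12\mathbb{E}[X^i_{R_t}]+2$ via the at-most-four positive-crowding-distance individuals, and the three-stage induction (crude $O(cv)$ bound at position $v$, the bound at $v+1$, then the sharpened $\frac{4e}{e-1}+o(1)$ bound at $v$ using that the inflow becomes $o(1)$ under $cv^2=o(n)$), with base case $X^v_{P_0}=0$ from the initialization hypothesis. Your explicit monotonicity argument (the individual with fewest $0$-bits is extremal in $f_2$, has infinite crowding distance, and survives, so the hypothesis at generation $t$ propagates backward to all earlier generations) is a welcome addition that the paper's time induction leaves implicit.
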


\begin{proof}
First we show that, similarly to Lemma~\ref{lem:zero_survives}, the probability that an $x\in R_t$ with zero crowding distance survives to $P_{t+1}$ is less than $\frac{1}{2}$. Unlike Lemma~\ref{lem:zero_survives}, for the \oneminmax benchmark, since any solution lies on the Pareto front, we have that $|F_{>1}|=0$. So the probability that $x$ survives is $\frac{N-|F_1^*|}{2N-|F_1^*|}<\frac{1}{2}$ without additional lower-order terms.

Then, similarly to Corollary~\ref{cor:develop} and using the same notations, $\mathbb{E}[X_{P_{t+1}}]\leq \frac{1}{2}(\mathbb{E}[X_{R_t}]+\mathbb{E}[X_{>0}])$. Since by Lemma $1$ of \cite{ZhengLD22}, $X_{>0}\leq 4$, we have $\mathbb{E}[X_{P_{t+1}}]\leq \frac{1}{2}\mathbb{E}[X_{R_t}] + 2$.

Similarly to Lemma~\ref{lem:bound}, we now prove by induction that $\mathbb{E}[X^v_{P_t}]\leq c_0 = \frac{e-1}{e}(c(v+1)+4)+o(c)$. For the base case, since for all $x\in P_0$, we have $|x|_0\geq \frac{1}{4}n > v$, because $n > 16$ and $v=o(\sqrt{n})$. So $\mathbb{E}[X^v_{P_0}]=0$. For the induction,  $\mathbb{E}[X^v_{R_t}]\leq \frac{e+1}{e}\mathbb{E}[X^v_{P_t}]+\frac{v+1}{n}c(n+1)=\frac{e+1}{e}\mathbb{E}[X^v_{P_t}]+c(v+1)+o(1)$ and in turn $\mathbb{E}[X^v_{P_{t+1}}]\leq \frac{1}{2}(\frac{e+1}{e}\mathbb{E}[X^v_{P_t}]+c(v+1)+o(1))+2$. Consequently, $\mathbb{E}[X^v_{P_t}]\leq \frac{e-1}{e}(c(v+1)+4)+o(c)=c_0$ for any generation $t$. Moreover, $\mathbb{E}[X^{v+1}_{P_{t+1}}]\leq \frac{1}{2}(\frac{e+1}{e}\mathbb{E}[X^{v+1}_{P_t}]+\frac{v+2}{n}c(n+1)+c_0)+2=\frac{1}{2}(\frac{e+1}{e}\mathbb{E}[X^{v+1}_{P_t}]+c(v+2)+o(1)+c_0)+2$. So $\mathbb{E}[X^{v+1}_{P_t}]\leq \frac{e-1}{e}(c(v+2)+c_0 + 4)+o(c)$.

Similarly to Corollary~\ref{cor:constant}, we can now prove a sharper bound on $\mathbb{E}[X^v_{P_t}]$. Now $\mathbb{E}[X^v_{P_{t+1}}]\leq \frac{1}{2}(\frac{e+1}{e}\mathbb{E}[X^v_{P_t}]+o(1))+2$, and consequently $\mathbb{E}[X^v_{P_t}]\leq \frac{4e}{e-1}+o(1)$. 
\end{proof}

Then, we show that only with very small probability, $o(n^{-\frac{4}{3}})$, the algorithm can make a progress of a size larger than $1$ in an iteration.
\begin{lemma}\label{lem:skip}
Consider the NSGA-II algorithm optimizing the \oneminmax benchmark for $n > 16$, with $N = c(n+1)$ for $c\geq 4$, where for all $x\in P_0$, $|x|_0\geq \frac{n}{4}$. Suppose $v\in[0..n]$ such that $cv^3=o(n)$ and for a generation $t$, there is no individual $x\in P_t$ such that $|x|_0 < v$. Suppose in $R_{t}$ the individual with the least number of 0-bits is $y$. Then $\Pr[|y|_0\leq v-2]=o(n^{-\frac{4}{3}})$.   
\end{lemma}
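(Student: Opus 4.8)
The plan is to bound the expected number of offspring that drop to at most $v-2$ zeroes and then apply Markov's inequality. Since by assumption no individual in $P_t$ has fewer than $v$ zeroes, the minimum-zero individual $y\in R_t$ can satisfy $|y|_0\le v-2$ only through some offspring doing so; hence it suffices to control $E$, the number of offspring with at most $v-2$ zeroes. The key structural observation is that an offspring with at most $v-2$ zeroes must stem from a parent with some number $z\ge v$ of zeroes, and producing it forces the mutation to flip at least $m:=z-v+2\ge 2$ of that parent's $z$ zero-bits (flipping one-bits only raises the zero-count). Writing $X^z_{P_t}$ for the number of parents with exactly $z$ zeroes, the per-parent probability is therefore $p_z\le\binom{z}{m}n^{-m}\le (z/n)^m/m!$ (the standard tail bound for $\Bin(z,1/n)$), and $\mathbb{E}[E]=\sum_{z\ge v}\mathbb{E}[X^z_{P_t}]\,p_z$.

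First I would split this sum into the three regimes $z=v$, $z=v+1$, and $z\ge v+2$, since the available bounds on $\mathbb{E}[X^z_{P_t}]$ differ sharply. For $z=v$ I would invoke the refined estimate $\mathbb{E}[X^v_{P_t}]\le \frac{4e}{e-1}+o(1)=O(1)$ from Lemma~\ref{lem:oneminmax_bound} (whose hypotheses hold here since $cv^3=o(n)$ implies $cv^2=o(n)$); combined with $p_v\le (v/n)^2/2$ this contributes $O(v^2/n^2)$. For $z=v+1$ I would use the coarser $\mathbb{E}[X^{v+1}_{P_t}]=O(cv)$ from the same lemma together with $p_{v+1}\le (v+1)^3/(6n^3)$, yielding $O(cv^4/n^3)$. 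For $z\ge v+2$ I would only use the trivial bound $\mathbb{E}[X^z_{P_t}]\le N=O(cn)$, but now $m\ge 4$, so an extra power of $v/n$ is gained from $p_z$.

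To control the tail $\sum_{z\ge v+2}$ I would show the upper-bound terms $\hat T_z:=N(z/n)^{z-v+2}/(z-v+2)!$ decrease geometrically: for $v+2\le z\le n-1$ the ratio is $\hat T_{z+1}/\hat T_z=\frac{z+1}{n}\cdot\frac{(1+1/z)^{m}}{m+1}\le \frac{e}{5}<1$, using $(1+1/z)^m\le e$ (valid as $m\le z$ for $v\ge 2$) and $m+1\ge 5$. Thus the tail is dominated by its leading term $\hat T_{v+2}=O(cv^4/n^3)$, the isolated all-zeroes term $z=n$ being super-polynomially small. Collecting the three contributions gives $\mathbb{E}[E]=O(v^2/n^2)+O(cv^4/n^3)$. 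Since $cv^3=o(n)$ forces $v=o(n^{1/3})$, I obtain $v^2/n^2=o(n^{-4/3})$ and $cv^4/n^3=o(n^{-5/3})=o(n^{-4/3})$, so $\mathbb{E}[E]=o(n^{-4/3})$; Markov's inequality then gives $\Pr[|y|_0\le v-2]\le\Pr[E\ge 1]\le\mathbb{E}[E]=o(n^{-4/3})$.

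The main obstacle, I expect, is getting the $z=v$ term below $n^{-4/3}$: even the $O(cv)$ bound on $\mathbb{E}[X^v_{P_t}]$ would only give $O(cv^3/n^2)=o(n^{-1})$ there, which is too weak, so the argument genuinely hinges on the sharp constant-order estimate $\mathbb{E}[X^v_{P_t}]=O(1)$ supplied by Lemma~\ref{lem:oneminmax_bound}. The secondary technical point is verifying the geometric decay of the tail cleanly, treating the $z=n$ boundary term separately, so that the entire $z\ge v+2$ range collapses to its first term.
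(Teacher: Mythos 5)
Your proposal is correct and follows essentially the same route as the paper's proof: bound the expected number of offspring reaching at most $v-2$ zeroes by splitting parents into the levels $z=v$, $z=v+1$, and $z\ge v+2$, use the constant-order and $O(cv)$ occupation bounds from Lemma~\ref{lem:oneminmax_bound} for the first two levels and the trivial bound $N$ for the rest, and finish with Markov's inequality. The only difference is cosmetic: you sum the per-parent probabilities over parent levels via an explicit geometric-decay estimate, whereas the paper obtains the same $((v+2)/n)^4$ bound for all parents with at least $v+2$ zeroes directly from Lemma~\ref{lem:going_up}.
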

\begin{proof}
By Lemma~\ref{lem:going_up}, to generate an individual with at most $v-2$ bits of $0$ through bit-wise mutation, for an individual with $v$ bits of $0$, the probability is at most $(\frac{v}{n})^2$, for an individual with $v+1$ bits of $0$, it is at most $(\frac{v+1}{n})^3$, and for an individual with at least $v+2$ bits of $0$, it is at most $(\frac{v+2}{n})^4$. Let $X$ denote the number of individuals with at most $v-2$ 0-bits in $R_{t}$, $c_0=\frac{4e}{e-1}+o(1)$, and $c_1=\frac{e-1}{e}(c(v+2)+c_0 + 4)+o(c)$. Then by Lemma~\ref{lem:oneminmax_bound}, $\mathbb{E}[X]\leq c_0(\frac{v}{n})^2+c_1(\frac{v+1}{n})^3+c(n+1)(\frac{v+2}{n})^4=o(n^{-\frac{4}{3}})$. So, $\Pr[|y|_0\leq v-2]=\Pr[X\geq 1]\leq \frac{\mathbb{E}[X]}{1}=o(n^{-\frac{4}{3}})$.
\end{proof} 

Finally, we combine everything to obtain a lower bound on the runtime.
\begin{theorem}\label{thm:omm}
Consider the NSGA-II algorithm optimizing the \oneminmax benchmark for $n > 16$, with $N = c(n+1)$ for $c\geq 4$ and $c=o(n^\mu)$ for $\mu < 1$. Then the number of fitness evaluations needed is at least $N(n(\frac{4e}{e-1}+o(1))^{-1}\frac{1-\mu}{3}\ln n)$.
\end{theorem}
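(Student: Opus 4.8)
The plan is to lower-bound the number of iterations until the extreme point $1^n$ (objective value $(0,n)$) is produced, which is necessary for covering the whole front, and to show that reaching it forces the algorithm to sweep slowly through the frontier value $i_t := \min_{x\in P_t}|x|_0$. First I would condition on the event that every initial individual has at least $\frac{n}{4}$ zero-bits; by a Chernoff bound each individual violates this with probability $e^{-\Omega(n)}$, and since $N=c(n+1)=o(n^{1+\mu})$ a union bound makes this event hold with probability $1-o(1)$, putting us exactly in the setting of Lemmas~\ref{lem:oneminmax_bound} and~\ref{lem:skip}. I would then fix the threshold $V:=\lfloor n^{(1-\mu)/3}\rfloor$, chosen so that $cV^3=o(n)$ (which holds because $c=o(n^\mu)$) and so that $\ln V=(1-o(1))\frac{1-\mu}{3}\ln n$; this is precisely the range of frontier values on which both auxiliary lemmas are valid.

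The core estimate is a per-iteration bound on the probability of advancing the frontier. When $i_t=v\le V$, an individual with more than $v$ zero-bits cannot create a child with at most $v-1$ zero-bits except via a drop of size at least two, whose total probability is $o(n^{-4/3})$ by Lemma~\ref{lem:skip}; the only realistic source of progress is an individual with exactly $v$ zero-bits flipping one of them, which happens with probability at most $\frac{v}{n}$ per such individual. Since Lemma~\ref{lem:oneminmax_bound} bounds the expected number of individuals with $v$ zero-bits by $\frac{4e}{e-1}+o(1)$ while the frontier rests at $v$, the probability that $i_{t+1}<v$ is at most $q_v:=\frac{v}{n}\bigl(\frac{4e}{e-1}+o(1)\bigr)$.

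To turn these one-step bounds into a runtime bound I would apply the additive-drift lower bound with the potential $g(i)=\sum_{v=1}^{\min(i,V)}q_v^{-1}$, which equals $g(V)$ for all $i\ge V$ and decreases by $q_v^{-1}$ whenever the frontier crosses level $v$. Starting the drift at the first generation $\tau$ with $i_\tau\le V$, the expected one-step decrease of $g$ is at most $1+o(1)$: the single-level drop contributes $q_v^{-1}\cdot\Pr[i_{t+1}<v]\le 1$, while multi-level drops contribute at most $g(V)\cdot o(n^{-4/3})=o(1)$ because $g(V)=O(n\log n)$. A separate crude estimate shows the frontier cannot jump from above $V$ to below $\frac{V}{2}$ in one step (this would need $\Omega(V)$ simultaneous correct flips, probability $n^{-\omega(1)}$), so $i_\tau\ge \frac{V}{2}$ and hence $g(i_\tau)\ge(1-o(1))g(V)$ with probability $1-o(1)$. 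Since $\sum_{v=1}^{V}q_v^{-1}=(1-o(1))\frac{n(e-1)}{4e}\ln V$, the drift theorem then yields $\mathbb{E}[T]\ge(1-o(1))g(V)=n\bigl(\frac{4e}{e-1}+o(1)\bigr)^{-1}\frac{1-\mu}{3}\ln n$ iterations, and multiplying by the $N$ evaluations per iteration gives the claimed bound.

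The main obstacle is this last conversion, specifically that Lemma~\ref{lem:oneminmax_bound} controls only the \emph{expectation} of the number of frontier individuals while the frontier rests at $v$, whereas the conditional one-step drift $\mathbb{E}[g(i_t)-g(i_{t+1})\mid\mathcal{F}_t]$ depends on the actual random count $X^v_{P_t}$ along a given history. Making the additive-drift application rigorous therefore requires care: one must either establish the drift bound only in its averaged (unconditional) form and combine it with an optional-stopping or fixed-horizon argument, or argue that the stationary bound $\mathbb{E}[X^v_{P_t}]\le\frac{4e}{e-1}+o(1)$ transfers to the conditional expectation given $\{i_t=v\}$. Correctly matching the leading factor $\frac{1-\mu}{3}$ hinges on this step, together with verifying (again via Lemma~\ref{lem:skip}) that the frontier visits every level in $\{1,\dots,V\}$ without skipping, with probability $1-o(1)$ over the whole $O(n\log n)$-iteration phase.
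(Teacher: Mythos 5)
Your proposal follows essentially the same route as the paper's proof: the same threshold $v=n^{(1-\mu)/3}$, the same conditioning on the initial population, the same use of Lemma~\ref{lem:skip} together with the known $O(n\log n)$ upper bound to rule out skips of size two or more, the same per-level progress probability $(\frac{4e}{e-1}+o(1))\frac{v}{n}$ derived from Lemma~\ref{lem:oneminmax_bound}, and the same harmonic sum producing the $\frac{1-\mu}{3}\ln n$ factor. The only difference is packaging --- you phrase the final summation as an additive-drift argument with a potential function, whereas the paper sums the per-level waiting times directly --- and the conditional-versus-unconditional expectation subtlety you rightly flag is equally present in, and left implicit by, the paper's own argument.
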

\begin{proof}
Let $v=n^\frac{1-\mu}{3}$ ($cv^3=o(n)$). For any generation $t$, suppose $x^t$ is the individual with the least number of $0$-bits in $P_t$, and we denote $|x^t|_0$ by $M_t$. Suppose we have $M_0\geq v$, which happens with probability at least $1-Ne^{-\frac{n}{8}}=1-o(1)$ as proven in Lemma~\ref{lem:all_in}, since $v<\frac{n}{4}$. Then suppose at generation $r$, we have $M_r=v$. Then for any generation $s\geq r$, we have $c(M_s)^3=o(n)$ and by Corollary~\ref{lem:skip}, the probability that $M_s-M_{s+1}\geq 2$ is $o(n^{-\frac{4}{3}})$. Since by Theorem $6$ of \cite{ZhengLD22}, the expected iterations needed to optimize the \oneminmax benchmark is $O(n\log n)$, we have that the expected times that the event $M_s-M_{s+1}\geq 2$ happens for any $s\geq r$ is $o(n^{-\frac{4}{3}})O(n\log n)=o(n^{-\frac{1}{3}}\log n)=o(1)$. Then the probability that the event $M_s-M_{s+1}\geq 2$ does not happen for any $s\geq r$ is $1-o(1)$. When that happens, after generation $r$, the algorithm makes progress by decreasing $M_s$ one by one until $M_{s'}=0$ for some $s'$.

By Lemma~\ref{lem:going_up}, to generate an individual with $M_s-1$ bits of $0$ through bit-wise mutation, for an individual with $M_s$ bits of $0$, the probability is at most $\frac{M_s}{n}$, for an individual with $M_s+1$ bits of $0$, the probability is at most $(\frac{M_s+1}{n})^2$, and for an individual with less than $M_s+1$ bits of $0$, the probability is at most $(\frac{M_s+2}{n})^3$. Let $X$ denote the number of individuals with $M_s-1$ bits of $0$ in $R_s$, $c_0=\frac{4e}{e-1}+o(1)$, and $c_1=\frac{e-1}{e}(c(v+2)+c_0 + 4)+o(c)$. Then by Lemma~\ref{lem:oneminmax_bound}, $\mathbb{E}[X]\leq c_0\frac{M_s}{n}+c_1(\frac{M_s+1}{n})^2+c(n+1)(\frac{M_s+2}{n})^3=(\frac{4e}{e-1}+o(1))\frac{M_s}{n}$. So the probability that $X\geq 1$ is at most $(\frac{4e}{e-1}+o(1))\frac{M_s}{n}$ and the waiting time for the event $M_{s}-M_{s+1}=1$ to happen is at least $(\frac{4e}{e-1}+o(1))^{-1}\frac{n}{M_s}$ iterations. So, for $M_s$ to decrease from $v$ to $0$ one by one, the total waiting time is $\sum_{i=1}^v(\frac{4e}{e-1}+o(1))^{-1}\frac{n}{i}>n(\frac{4e}{e-1}+o(1))^{-1}\ln v=n(\frac{4e}{e-1}+o(1))^{-1}\frac{1-\mu}{3}\ln n$.

Since the probability that event i) once $M_s$ reaches $v$, it decreases by at most one in an iteration happen given that event ii) $M_0\geq v$ happen, and the probability that event ii) happens, are both at least $1-o(1)$, and when both i) and ii) happen, the number of iterations needed is at least $n(\frac{4e}{e-1}+o(1))^{-1}\frac{1-\mu}{3}\ln n$, we have that in any case, the number of iterations needed is at least $(1-o(1))^2(n(\frac{4e}{e-1}+o(1))^{-1}\frac{1-\mu}{3}\ln n)=(1-o(1))(n(\frac{4e}{e-1}+o(1))^{-1}\frac{1-\mu}{3}\ln n)$, corresponding to $N(n(\frac{4e}{e-1}+o(1))^{-1}\frac{1-\mu}{3}\ln n)$ fitness evaluations.
\end{proof}

\section{Experiments}

To complement our theoretical results, we also experimentally evaluate some runs of the NSGA\nobreakdash-II on the \ojzj benchmark, both with respect to the runtime and the population dynamics. We note that \cite{DoerrQ22ppsn} already presented some results on the runtime (for $k=3$, $N/(2n-k+3) = 2, 4, 8$, and $n=20, 30$). We therefore mostly concentrate on the population dynamics, i.e., the number of individuals in the population for each objective value on the Pareto front, which our theoretical analyses have shown to be crucial for determining the lower bound and the leading coefficient of the runtime.

\subsection{Settings}

We implemented the algorithm as described in the Preliminaries section in Python, and tested the following settings.
\begin{itemize}
    \item Problem size $n$: $50$ and $100$.
    \item Jump size $k$: $2$. This  small number was necessary to admit the problem sizes above. Problem sizes of a certain magnitude are needed to see a behavior not dominated by lower-order effects. 
    \item Population size $N$: $2(n-2k+3)$, $4(n-2k+3)$, and $8(n-2k+3)$. \cite{DoerrQ22ppsn} suggested that, even though their mathematical analysis applies only for $N\geq 4(n-2k+3)$, already for $N=2(n-2k+3)$ the algorithm still succeeds empirically. Therefore, we have also experimented with $N=2(n-2k+3)$ to confirm that our arguments for the population dynamics still apply for the smaller population size.
    \item Selection for variation: fair selection.
    \item Mutation method: bit-wise mutation with rate $\frac{1}{n}$.
    \item Number of independent repetitions per setting: $10$.
\end{itemize}

\subsection{Results on the Runtime}

\begin{table}[t]
\centering
\begin{tabular}{|r|r|r|}
\hline
& $n=50$ & $n=100$ \tabularnewline
\hline
$N=2(n-2k+3)$ & $247{,}617$ & $2{,}411{,}383$ \\
$N=4(n-2k+3)$ & $416{,}284$ & $3{,}858{,}084$ \\
$N=8(n-2k+3)$ & $714{,}812$ & $6{,}792{,}456$ \\
\hline
\end{tabular}
\caption{Average runtime of the NSGA-II with bit-wise mutation on the \ojzj benchmark with $k=2$.}\label{tab1}
\end{table}

Table~\ref{tab1} contains the average runtime (number of fitness evaluations done until the full Pareto front is covered) of the algorithm. For all of the settings, we have observed a standard deviation that is between $50\%$ to $80\%$ of the mean, which supports our reasoning that the runtime is dominated by the waiting time needed to find the two extremal points of the front, which is the maximum of two geometric random variables. An obvious observation from the data is that increasing $N$ does not help with the runtime, supporting our theoretical results that the lower bound on the runtime increases when $N$ increases. Moreover, for all the settings that we have experimented with, the average runtime is well above our theoretically proven lower bound made tighter by discarding the lower order terms, namely $\frac{3}{2}(\frac{4}{e-1})^{-1}Nn^k$.

\subsection{Results on the Population Dynamics}

\begin{figure}[t]
\centering
\includegraphics[width=0.9\columnwidth]{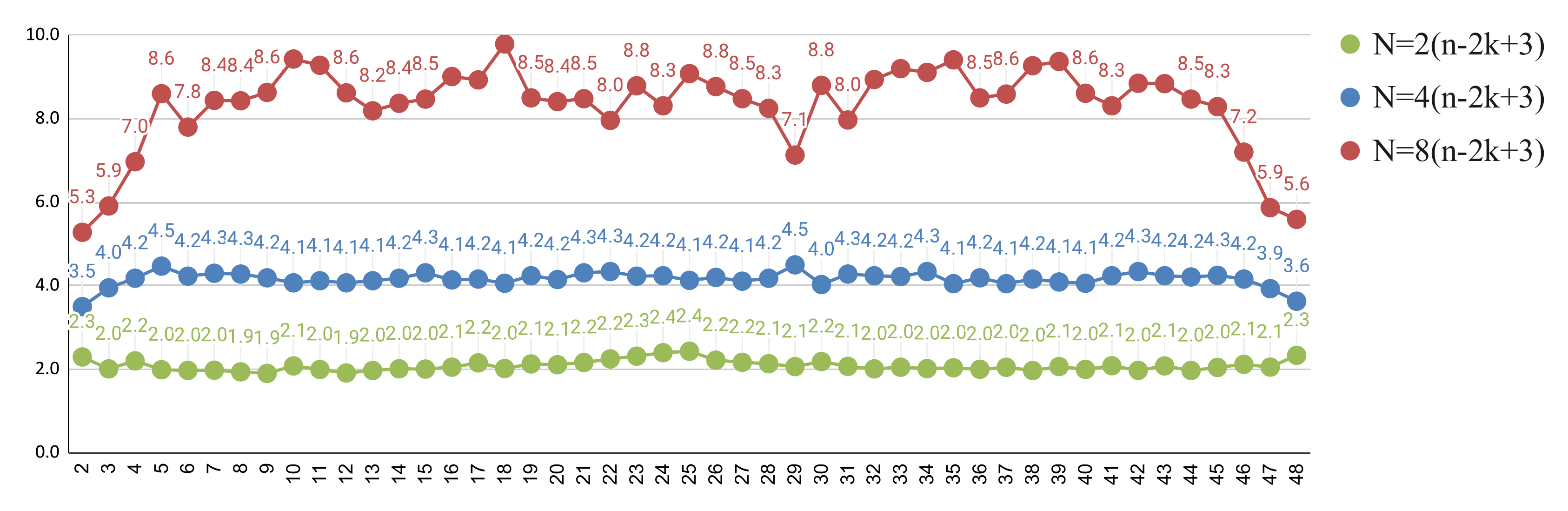}
\caption{Average number of individuals with $i\in [k .. n-k]$ $1$-bits for $n=50$ and $k=2$.}
\label{fig1}
\end{figure}
\begin{figure}[t]
\centering
\includegraphics[width=0.9\columnwidth]{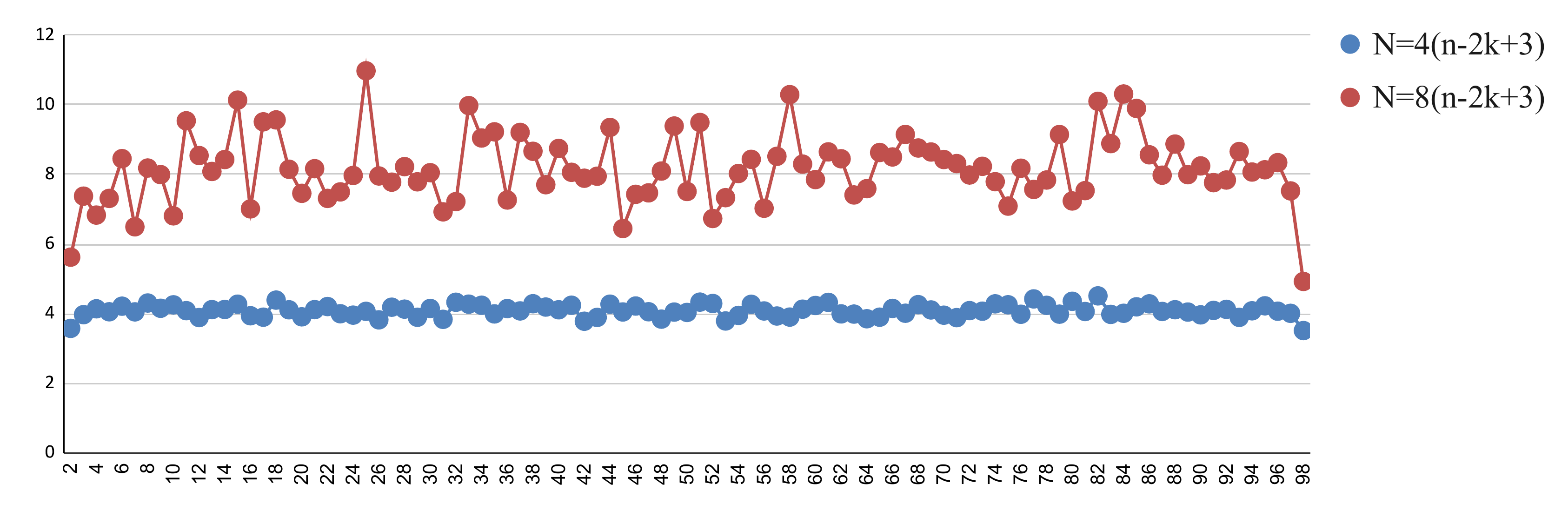}
\caption{Average number of individuals with $i\in [k .. n-k]$ $1$-bits for $n=100$ and $k=2$.}
\label{fig2}
\end{figure}

For most of the experiments conducted, we have also recorded the population dynamics throughout the executions of the algorithm (mistakenly, we did not do so for $n=100$ and $N = 2(n-2k+3)$; however, from the very even distribution seen for $n=50$ for this population size, we would be very surprised to see a different pattern for $n=100$; of course, we will fix this omission for the final version of this paper). Specifically, for each run, for every $n^k/50$ iterations, we record for each $i\in [k .. n-k]$ how many individuals there are in the parent population with $i$ bits of $1$. Since as shown in our theoretical analyses and \cite{DoerrQ22ppsn}, the greatest contributor to the runtime is the waiting time to find the all-ones and the all-zeroes strings after the inner part of the Pareto front has been discovered, we are mostly interested in how the population dynamics develop in that phase. To this end, we discard data points recorded when the inner part of the Pareto front has not been fully covered, and those recorded after one of the extremal points has already been discovered. The final numbers reported for a run is the average of the data points kept. In the end we report the average of the means across $10$ repetitions. In all of the runs, we have never observed an initial population not contained in the inner part of the Pareto front, supporting our theoretical arguments and also making the experiments fall into the scenario that we have studied theoretically.

Figure \ref{fig1} contains the average number of individuals throughout a run of the algorithm for each point on the inner part of the Pareto front for $n=50$, averaged by the $10$ repetitions, and Figure \ref{fig2} contains that for $n=100$. An obvious observation is that for all experiment settings, we have that the average number of individuals with $k$ or $n-k$ $1$-bits is less than the proven upper bound $\frac{4e}{e-1} \approx 6.33$. When doubling the population size, the number of individuals with $k$ or $n-k$ $1$-bits grows. This does not contradict with our upper bound (which is independent of the population size), but it only suggests that the precise average occupation of these objective values contains a dependence on the population size that is small enough for this number to be bounded by $\frac{4e}{e-1} \approx 6.33$. We note that for the setting with fixed sorting the precise occupation number $\frac{2ec}{2ec-c+2} \pm o(1)$ we proved displayed exactly such a behavior. 

Our experimental data also give the occupation numbers for the other objective values. We did not discuss these in much detail in our theoretical analysis since all we needed to know was the occupation number for the outermost points of the inner part of the Pareto front and a relatively generous upper bound for the points one step closer to the middle. A closer look into our mathematical analysis shows that it does give good estimates only for objective values close to the outermost points of the Pareto front. For that reason, it is interesting to observe that our experimental data show that the population is, apart from few positions close to the outermost positions, very evenly distributed on the Pareto front (that is, a typical position is occupied by $c$ individuals, where $c$ is such that the population size is $N=c(n-2k+3)$). Given the mostly random selection of most of the next population (apart from the up to $4$ individuals with positive crowding distance per position) and the drift towards the middle in the offspring generation (e.g., a parent with $\frac 34 n$ ones is much more likely to generate an offspring with fewer than more ones), this balanced distribution was a surprise to us. While it has no influence on the time to find the Pareto front of \ojzj, we suspect that such balanced distributions are preferable for many other problems.

\section{Conclusions and Future Works}

In this work, we gave the first lower bounds matching previously proven upper bounds for the runtime of the \NSGA. We proved that the runtime of the \NSGA with population size at least four times the Pareto front size computes the full Pareto front of the \oneminmax problem in expected time (number of function evaluations) $\Omega(N n \log n)$ and the one of the \ojzj problem with jump size $k$ in expected time $\Omega(N n^k)$. These bounds match the corresponding $O(N n \log n)$ and $O(N n^k)$ upper bounds shown respectively in~\cite{ZhengLD22} and \cite{DoerrQ22ppsn}. These asymptotically tight runtimes show that, different from many other population-based search heuristics, the \NSGA does not profit from larger population sizes, even in an implementation where the expected numbers $\Theta(n \log n)$ and $\Theta(n^k)$ of iterations is the more appropriate performance criterion. Together with the previous result~\cite{ZhengLD22} that a population size below a certain value leads to a detrimental performance of the \NSGA, our results show that the right choice of the population size of the \NSGA is important for an optimal performance, much more than for many single-objective population-based algorithms, where larger population sizes at least for certain parameter ranges have little influence on the number of fitness evaluations needed.

The main obstacle we had to overcome in our analysis was to understand sufficiently well the population dynamics of the \NSGA, that is, the expected number of individuals having a particular objective value at a particular time. While we have not completely understood this question, our estimates are strong enough to obtain, for the \ojzj benchmark and the \NSGA using a fixed sorting to determine the crowding distance, a runtime guarantee that is also tight including the leading constant.
% for the variant of the \NSGA that uses the same sorting for the two objectives in the computation of the crowding distance. 
%To the best our knowledge, this is the first runtime result for a MOEA that is that precise. 

From this work, a number of possible continuations exist. For example, runtime analyses which are tight including the leading constant allow one to distinguish constant-factor performance differences. This can be used to optimize parameters or decide between different operators. For example, we have used the mutation rate $\frac 1n$, which is the most accepted choice for bit-wise mutation. By conducting our analysis for a general mutation rate $\frac \alpha n$, one would learn how the mutation rate influences the runtime and one would be able to determine an optimal value for this parameter. We note that a different mutation rate not only changes the probability to reach the global optimum from the local one (which is well-understood~\cite{DoerrLMN17}), but also changes the population dynamics. We are nevertheless optimistic that our methods can be extended in such directions. 

\bibliographystyle{alphaurl}
\bibliography{ich_master,alles_ea_master}
%\end{large}
}%end sloppy
\end{document}